\documentclass{article}

\usepackage{microtype}
\usepackage{graphicx}
\usepackage{subcaption} 
\usepackage{booktabs} 

\usepackage{hyperref}
\usepackage{caption}
\usepackage{multirow}
\usepackage{mathtools}
\usepackage{booktabs}
\usepackage{graphicx} 
\usepackage{multirow}

\usepackage[accepted]{icml2026}

\usepackage{algorithmic}
\usepackage{algorithm}

\usepackage{amsmath}
\usepackage{amssymb}
\usepackage{mathtools}
\usepackage{amsthm}

\usepackage[capitalize,noabbrev]{cleveref}

\theoremstyle{plain}
\newtheorem{theorem}{Theorem}[section]
\newtheorem{proposition}[theorem]{Proposition}

\theoremstyle{definition}

\theoremstyle{remark}

\usepackage[textsize=tiny]{todonotes}

\icmltitlerunning{Support-Proximity Augmented Diffusion Estimation for Offline Black-Box Optimization}

\begin{document}

\twocolumn[
\icmltitle{Support-Proximity Augmented Diffusion Estimation\\ for Offline Black-Box Optimization}



\icmlsetsymbol{equal}{*}
\icmlsetsymbol{visit}{\textdagger}
\icmlsetsymbol{indep}{\ddag}

\begin{icmlauthorlist}
\icmlauthor{Yonghan Yang}{equal,mbz}
\icmlauthor{Ye Yuan}{equal,visit,mbz,mcgill,mila}
\icmlauthor{Zipeng Sun}{mcgill,mila}
\icmlauthor{Linfeng Du}{mcgill,mila}\\
\icmlauthor{Bowei He}{mbz}
\icmlauthor{Haolun Wu}{mcgill,mila}
\icmlauthor{Can Chen}{mila,amz,indep}
\icmlauthor{Xue Liu}{mbz,mcgill,mila}
\end{icmlauthorlist}

\icmlaffiliation{mbz}{MBZUAI - Mohamed bin Zayed University of Artificial Intelligence}
\icmlaffiliation{mcgill}{McGill University}
\icmlaffiliation{mila}{Mila - Quebec AI Institute}
\icmlaffiliation{amz}{Amazon AGI}

\icmlcorrespondingauthor{Ye Yuan}{ye.yuan3@mail.mcgill.ca}

\icmlkeywords{Diffusion Models, Kernel Density Estimation, Black-Box Optimization}

\vskip 0.3in
]



\printAffiliationsAndNotice{\icmlEqualContribution\ \textdagger This work is done during Ye's visit at MBZUAI. \ddag Work done independent of the author's position at Amazon AGI.} 

\begin{abstract}
Offline black-box optimization aims to discover novel designs with high property scores using only a static dataset, a task fundamentally challenged by the out-of-distribution (OOD) extrapolation problem. 
Existing approaches typically bifurcate into inverse methods, which struggle with the ill-posed nature of mapping scores to designs, and forward methods, which often lack the distributional expressivity to quantify uncertainty effectively. 
In this work, we propose \textbf{SPADE} (\textbf{S}upport-\textbf{P}roximity \textbf{A}ugmented \textbf{D}iffusion \textbf{E}stimation), a novel framework that reimagines forward surrogate modeling through the lens of conditional generative modeling. 
SPADE models the forward likelihood $p(y|\boldsymbol{x})$ using a diffusion model, but with two critical enhancements to tailor it for optimization: (1) a \emph{Calibrated Diffusion Estimation} module that enforces global consistency in statistical moments and pairwise rankings, and (2) a \emph{Support-Proximity Regularization} mechanism that implicitly internalizes the data manifold constraint $p(\boldsymbol{x})$ via kNN-based density estimation. 
Theoretically, we prove that our regularization is first-order equivalent to maximizing a Bayesian posterior with a valid design prior. 
Empirically, SPADE achieves state-of-the-art performance across Design-Bench tasks and an LLM data mixture optimization benchmark.
Our code is available at \href{https://github.com/HarryYoung2018/spade}{https://github.com/HarryYoung2018/spade}.
\end{abstract}

\section{Introduction}
\label{sec:intro}

The problem of discovering optimal designs from static datasets, known as offline black-box optimization (BBO)~\citep{trabucco2022designbench, kim2026offline}, is central to numerous scientific and engineering disciplines where online data collection is prohibitively expensive or dangerous.
Applications range from designing biological sequences such as proteins~\citep{sample2019human} and DNA~\citep{barrera2016survey, daniel2018comprehensive, brookes2019conditioning} to optimizing hardware accelerators~\citep{xue2025bboplace} and tuning robot controllers~\citep{brockman2016openai, ahn2020robel}.
Unlike active learning settings, the offline learner must infer the objective landscape and identify high-performing candidates solely from fixed historical data, without access to the ground-truth oracle for verification.
Consequently, the core challenge lies in the \emph{out-of-distribution (OOD) problem}: optimization algorithms naturally tend to exploit the epistemic errors of the surrogate model, gravitating towards adversarial regions where performance is overestimated.

To tackle this, the literature has largely diverged into two paradigms.
\emph{Inverse methods} directly model the conditional design distribution $p(\boldsymbol{x}|y)$ using generative models, allowing for the generation of candidates conditioned on high target scores~\citep{krishnamoorthy2023diffusion, yuan2025paretoflow}.
While appealing, learning the inverse mapping is often an ill-posed one-to-many problem, making these models difficult to train and prone to mode collapse.
Conversely, \emph{forward methods} learn a surrogate $y \approx f_{\boldsymbol{\theta}}(\boldsymbol{x})$ to guide the search~\citep{trabucco2021conservative, yuan2023importance}.
However, standard regression models (e.g., deterministic neural networks) often fail to capture the complex distributional uncertainty required to detect OOD risks, while ensemble-based or Bayesian neural networks can be computationally intensive and hard to scale.

We identify three key limitations in current methodologies.
First, while diffusion models have revolutionized inverse approaches to capture $p(\boldsymbol{x}|y)$, their potential as robust forward surrogates for modeling $p(y|\boldsymbol{x})$ remains \emph{underexplored}.
Forward modeling is generally a better-posed many-to-one problem, yet standard regression baselines lack the probabilistic flexibility of diffusion models to capture heteroscedasticity and multi-modality.
Second, simply replacing a regressor with a diffusion model is insufficient.
Standard diffusion training focuses on capturing the training distribution, which does not guarantee calibrated expected values or global ranking accuracy required for effective optimization.
A naive application of diffusion models may fail to serve as a reliable surrogate.
Third, a standalone forward likelihood $p(y|\boldsymbol{x})$ ignores the prior constraint $p(\boldsymbol{x})$.
Without an explicit mechanism to penalize low-density regions, even a probabilistic surrogate can be exploited by the optimizer in unsupported areas far from the data manifold.

Motivated by these insights, we propose \textbf{SPADE} (\textbf{S}upport-\textbf{P}roximity \textbf{A}ugmented \textbf{D}iffusion \textbf{E}stimation), a unified framework that constructs a robust, uncertainty-aware surrogate for offline optimization.
As illustrated in Figure~\ref{fig:framework}, SPADE addresses the aforementioned gaps through a holistic workflow that enhances standard diffusion training with two synergistic modules.
To ensure the diffusion model serves as an effective surrogate, we introduce \textbf{\emph{Calibrated Diffusion Estimation}}, which regularizes the training with first-order moment matching and rank consistency losses, aligning the surrogate's predictions with the global landscape topology.
To incorporate prior constraints, we introduce \textbf{\emph{Support-Proximity Regularization}}.
This mechanism leverages a non-parametric $k$-nearest neighbor ($k$NN) estimator to measure data support and explicitly penalizes the surrogate's predictions via mean shrinking and variance inflation in OOD regions.
Crucially, we prove that this geometric regularization is theoretically equivalent to optimizing the Bayesian posterior $p(\boldsymbol{x}|y) \propto p(y|\boldsymbol{x})p(\boldsymbol{x})$.

Our contributions are summarized as follows:
\begin{itemize}
    \item We explore the potential of diffusion models for \emph{forward surrogate modeling}, demonstrating their superiority over deterministic regression in capturing complex predictive distributions for offline BBO.
    \item We introduce a \emph{Calibrated Diffusion Estimation} module incorporating first-order moment matching and pairwise ranking regularization, effectively tailoring the diffusion training objective to the requirements of optimization tasks.
    \item We propose \emph{Support-Proximity Regularization} to enforce data manifold constraints and provide a theoretical proof that it is first-order equivalent to maximizing the joint objective of utility and prior probability.
    \item We conduct experiments on Design-Bench and an LLM data mixture optimization benchmark, showing that SPADE achieves state-of-the-art performance.
\end{itemize}

\paragraph{Conflict of Interest Disclosure.} 
All authors are affiliated with academic institutions. 
The benchmarks and baselines evaluated in this work were not developed by any author's employer, and the authors declare no financial conflicts of interest related to this work.

\begin{figure}[t]
    \centering
    \includegraphics[width=0.48\textwidth]{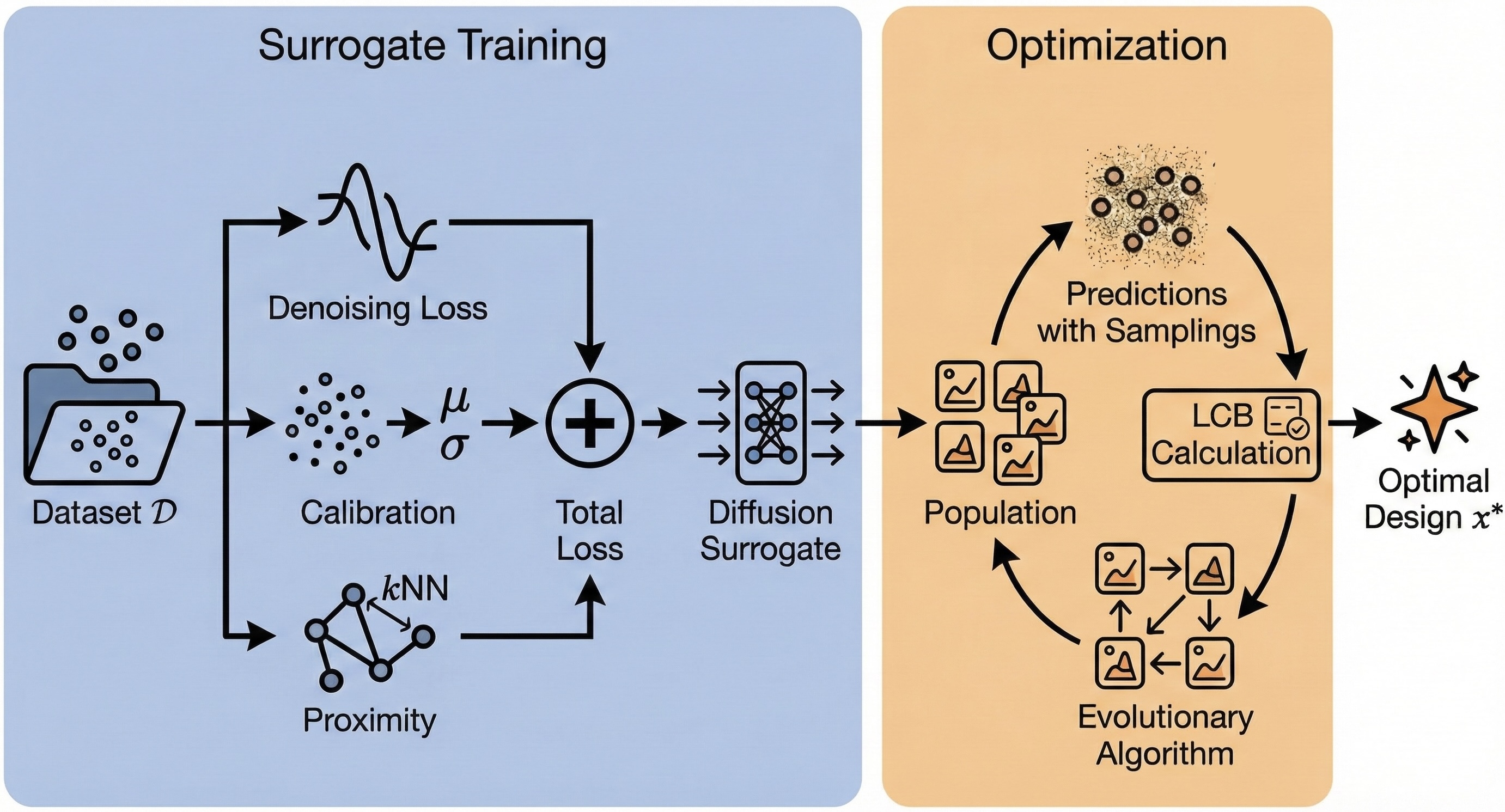}
    \caption{The pipeline consists of two stages: 
\textbf{Surrogate Training} learns a conditional diffusion model regularized by \emph{Calibrated Diffusion Estimation} (ensuring moment and rank consistency) and \emph{Support-Proximity Regularization} (penalizing OOD regions via kNN distances). 
\textbf{Optimization} then searches for the optimal design $\boldsymbol{x}^*$ using an Evolutionary Algorithm to maximize the Lower Confidence Bound (LCB) derived from the surrogate.}
\vspace{-4mm}
    \label{fig:framework}
\end{figure}
\section{Preliminary} \label{sec:prelim}

\subsection{Offline Black-Box Optimization}
Offline black-box optimization (BBO) studies the problem of discovering a design $\boldsymbol{x}^{*} \in \mathcal{X} \subseteq \mathbb{R}^{D}$ that maximizes an unknown objective function $f:\mathcal{X} \rightarrow \mathbb{R}$, given access only to a static dataset of past evaluations $\mathcal{D}=\{(\boldsymbol{x}_i, y_i)\}_{i=1}^{N}$, where $\boldsymbol{x}_i$ denotes an existing design and $y_i=f(\boldsymbol{x}_i)$ represents its corresponding property score~\citep{trabucco2022designbench, kim2026offline}.
Unlike online optimization or active learning, the learner must rely solely on $\mathcal{D}$ to infer both the structure of the objective function and an appropriate notion of uncertainty for extrapolation.
No additional evaluations of $f$ are available during training or optimization.

A unified probabilistic formulation of the offline optimization problem is given by the joint distribution of designs and property scores, $p(\boldsymbol{x}, y)$.
By Bayes’ rule, the conditional distribution of designs given a property score can be expressed as~\citep{kim2026offline}:
\begin{equation}
\label{eq:bayes-factorization}
p(\boldsymbol{x} \mid y)\;\propto\;p(\boldsymbol{x})\,p(y\mid \boldsymbol{x}),
\end{equation}
where $p(y\mid \boldsymbol{x})$ models the forward relationship between design and property, and $p(\boldsymbol{x})$ serves as a prior over feasible or data-supported regions of $\mathcal{X}$.
This decomposition clarifies the connection between two major modeling paradigms in offline BBO.
\emph{Forward methods} approximate $p(y \mid \boldsymbol{x})$ to predict outcome scores~\citep{trabucco2021conservative, yuan2023importance}.
\emph{Inverse methods}, in contrast, learn $p(\boldsymbol{x} \mid y)$ directly to generate new designs conditioned on desired property scores~\citep{brookes2019conditioning, krishnamoorthy2023diffusion}.

\subsection{Optimization Strategies}
\label{sec:prelim:optimization}
Once a forward surrogate is learned, the search for $\boldsymbol{x}^{*}$ requires an optimization strategy.
The most prevalent approach in existing literature involves performing \emph{gradient ascent} directly on a deterministic surrogate $f_{\theta}(\boldsymbol{x})$~\citep{trabucco2021conservative, yuan2023importance}.
Starting from an initial design $\boldsymbol{x}_0 \in \mathcal{D}$, the design is iteratively updated via $\boldsymbol{x}_{t+1} = \boldsymbol{x}_t + \eta \nabla_{\boldsymbol{x}} f_{\theta}(\boldsymbol{x}_t)$.
While straightforward, this method is inherently local and prone to getting stuck in local optima or exploiting adversarial regions where the deterministic model overestimates performance without warning.

An alternative strategy, widely used in Bayesian Optimization, is to maximize an \emph{acquisition function} $\mathcal{A}(\boldsymbol{x})$.
Constructed from a probabilistic surrogate that outputs a distribution $\mathcal{N}(\mu(\boldsymbol{x}), \sigma^2(\boldsymbol{x}))$, the acquisition function $\mathcal{A}(\mu, \sigma)$ (e.g., Lower Confidence Bound) quantifies the utility of a candidate by incorporating both the expected performance ($\mu$) and the epistemic uncertainty ($\sigma$).
However, this powerful paradigm remains underutilized in offline BBO because standard surrogates (e.g., MLPs) are deterministic and fail to provide the uncertainty $\sigma(\boldsymbol{x})$ necessary for risk-aware acquisition.
This limitation necessitates the use of probabilistic models, such as Gaussian Processes or our proposed Diffusion framework, to enable robust global search via acquisition maximization.

\subsection{Kernel and kNN Density Estimation}
Kernel density estimation (KDE) provides a nonparametric approach to approximate the underlying data distribution $p(\boldsymbol{x})$.
Given a set of samples $\{\boldsymbol{x}_i\}_{i=1}^{N}$ drawn from an unknown distribution and a kernel function $K_h(\cdot)$ with bandwidth parameter $h>0$, the KDE of $p(\boldsymbol{x})$ is defined as:
\begin{equation}
\label{eq:kde}
\hat{p}_{\text{kde}}(\boldsymbol{x})
= \frac{1}{N h^D}\sum_{i=1}^{N} K\!\left(\frac{\|\boldsymbol{x}-\boldsymbol{x}_i\|}{h}\right),
\end{equation}
where $K(\cdot)$ is a symmetric, positive kernel.

A widely used estimator is the $k$-nearest neighbor ($k$NN) density estimator, which can be viewed as a special case of KDE with an adaptive bandwidth determined by the distance to the $k$-th nearest neighbor and a uniform kernel $K(u) \propto \mathbb{I}\{\|u\|\le 1\}$~\citep{gao2017density} in Eq.~\eqref{eq:kde}.
Specifically, let $R_k(\boldsymbol{x})$ denote the Euclidean distance from $\boldsymbol{x}$
to its $k$-th nearest neighbor among the $N$ samples,
and let $V_D=\pi^{D/2}/\Gamma(\tfrac{D}{2}+1)$ denote the volume of the $D$-dimensional unit ball.
The kNN density estimator is then defined as:
\begin{equation}
\label{eq:knn-density}
\hat{p}_{\text{knn}}(\boldsymbol{x})
= \frac{k}{N\,V_D\,R_k(\boldsymbol{x})^D}.
\end{equation}
The kNN estimator adapts its local bandwidth according to the sample density,
offering robustness in high-dimensional or irregularly distributed datasets.
\section{Methodology}
\label{sec:method}

\begin{algorithm}[tb]
   \caption{\textbf{SPADE}}
   \label{alg:spade}
\begin{algorithmic}[1]
   \STATE {\bfseries Input:} Static dataset $\mathcal{D}$, regularization weights $\lambda_1, \lambda_2$, LCB coefficient $\beta$, number of neighbors $k$.
   \STATE {\bfseries Output:} Optimal design $\boldsymbol{x}^*$.
   \STATE \texttt{\textcolor{gray}{/* Surrogate Training Stage */}}
   \STATE Initialize network $\epsilon_{\boldsymbol{\theta}}(\cdot)$ and build $k$NN index on $\mathcal{D}$.
   \WHILE{not converged}
       \STATE Sample mini-batch $\mathcal{B} = \{(\boldsymbol{x}, y)\} \sim \mathcal{D}$.
       \STATE \small{\texttt{\textcolor{gray}{/* Calibrated Diffusion Estimation */}}}
       \STATE Compute base denoising loss $\mathcal{L}_{\text{diff}}$ using Eq.~(\ref{eq:l_diff}).
       \STATE Generate short-run MC samples for $\boldsymbol{x} \in \mathcal{B}$ to estimate $\hat{\mu}_\theta(\boldsymbol{x}), \hat{\sigma}_\theta(\boldsymbol{x})$.
       \STATE Compute calibration loss $\mathcal{L}_{\text{calib}}(\theta; \mathcal{B})$ using Eq.~(\ref{eq:l_calib}).
       \STATE \small{\texttt{\textcolor{gray}{/* Support-Proximity Regularization */}}}
       \STATE Query $k$NN to get distance $d(\boldsymbol{x})$ and neighbor mean $\mu_{\text{NN}}(\boldsymbol{x})$.
       \STATE Compute proximity loss $\mathcal{L}_{\text{prox}}(\theta; \mathcal{B})$ using Eq.~(\ref{eq:l_prox}).
       \STATE Update $\boldsymbol{\theta}$ to minimize total loss $\mathcal{L}$ in Eq.~(\ref{eq:total-loss}).
   \ENDWHILE
   \STATE \texttt{\textcolor{gray}{/* Optimization Stage */}}
   \STATE Initialize population $\mathcal{P}$ with high-scoring designs.
   \FOR{$gen = 1, 2, \cdots, G$}
       \FOR{candidate $\boldsymbol{x} \in \mathcal{P}$}
           \STATE Generate $M$ samples $y^{(m)} \sim p_{\boldsymbol{\theta}^*}(\cdot|\boldsymbol{x})$.
           \STATE Calculate $\mathrm{LCB}(\boldsymbol{x})$ using Eq.~(\ref{eq:lcb_mc}).
       \ENDFOR
       \STATE Update $\mathcal{P}$ using Evolutionary Algorithm.
   \ENDFOR
   \STATE $\boldsymbol{x}^* \longleftarrow \arg\max_{\boldsymbol{x} \in \mathcal{P}} \mathrm{LCB}(\boldsymbol{x})$
   \STATE {\bfseries return} $\boldsymbol{x}^*$
\end{algorithmic}
\end{algorithm}

Guided by the unified probabilistic formulation in Eq.~\eqref{eq:bayes-factorization}, we aim to discover optimal designs by modeling the joint probability of high scores and feasible data support.
To this end, we propose \textbf{SPADE} (\textbf{S}upport-\textbf{P}roximity \textbf{A}ugmented \textbf{D}iffusion \textbf{E}stimation).
SPADE is composed of two synergistic modules.
We begin in Section~\ref{sec:method:calib} with (1) \textbf{Calibrated Diffusion Estimation}, establishing a conditional diffusion surrogate $p_\theta(y\mid \boldsymbol{x})$ that is explicitly regularized for accurate statistical moments and pairwise rankings.
Complementing this, Section~\ref{sec:method:prox} introduces (2) \textbf{Support-Proximity Regularization}, a mechanism that penalizes predictions in low-density regions via kNN proximity, thereby internalizing the design prior $p(\boldsymbol{x})$ into the likelihood model.
Finally, in Section~\ref{sec:method:objective}, we describe the joint training objective and the acquisition-based optimization strategy.
The pseudo-code of the entire framework is provided in Algorithm~\ref{alg:spade}.

\subsection{Calibrated Diffusion Estimation}
\label{sec:method:calib}

The foundation of our framework is a forward surrogate that models the conditional distribution of property scores $p(y\mid \boldsymbol{x})$.
As highlighted in Section~\ref{sec:prelim:optimization}, effective acquisition-based optimization requires quantifying epistemic uncertainty. 
While standard regression models yield deterministic point estimates, they fail to provide the distributional information necessary for such risk-aware search.
To address this, we employ a Denoising Diffusion Probabilistic Model (DDPM)~\citep{ho2020denoising} to capture the full predictive distribution of the property scores.

\paragraph{Base Diffusion Surrogate.}
We parameterize $p_\theta(y\mid \boldsymbol{x})$ using a conditional diffusion process.
Let $\{\beta_t\}_{t=1}^{T}$ be a fixed variance schedule, define $\alpha_t=1-\beta_t$, and $\bar{\alpha}_t=\prod_{s=1}^t \alpha_s$.
The forward process diffuses a clean property score $y_0$ into Gaussian noise: $q(y_t\mid y_0)=\mathcal{N} (\sqrt{\bar{\alpha}_t}\,y_0,\,(1-\bar{\alpha}_t)\mathbf{I})$.
The reverse generative process is parameterized by a noise-prediction network $\epsilon_\theta$ conditioned on the design $\boldsymbol{x}$:
\begin{align}
p_\theta(y_{t-1}\mid y_t,\boldsymbol{x})=\mathcal{N}\!\Big(\mu_\theta(y_t,t,\boldsymbol{x}),\,\tilde{\beta}_t\mathbf{I}\Big),\\
\mu_\theta(y_t,t,\boldsymbol{x})=\frac{1}{\sqrt{\alpha_t}}\!\left(y_t-\frac{1-\alpha_t}{\sqrt{1-\bar{\alpha}_t}}\ \epsilon_\theta(y_t,t,\boldsymbol{x})\right),
\end{align}
where $\tilde{\beta}_t$ represents the variance of the reverse step.
We train $\epsilon_\theta$ using the standard denoising score matching objective:
\begin{align}
\label{eq:l_diff}
\mathcal{L}_{\text{diff}}(\theta)
=\mathbb{E}_{(\boldsymbol{x},y)\sim\mathcal{D}}\ \mathbb{E}_{t,\epsilon}
\big\|\epsilon-\epsilon_\theta(y_t,t,\boldsymbol{x})\big\|_2^2,
\end{align}
where $y_t$ is the noisy version of $y$ at step $t$.

\paragraph{Calibration via Moment and Rank Regularization.}
While the denoising loss in Eq.~\eqref{eq:l_diff} effectively captures the underlying conditional distribution, downstream optimization relies heavily on the \emph{global statistics} and \emph{relative ordering} of the candidates.
Specifically, the optimizer utilizes the expected value $\hat{\mu}_\theta(\boldsymbol{x})=\mathbb{E}_{p_\theta}[y]$ to estimate utility, and the pairwise rankings to distinguish superior designs from inferior ones.
However, the standard objective operates at the level of noise prediction and does not explicitly guarantee accuracy in these global metrics.
To bridge this gap, we introduce a \emph{calibration loss} computed on a mini-batch $\mathcal{B}$ sampled from $\mathcal{D}$.
For each $\boldsymbol{x} \in \mathcal{B}$, we explicitly estimate the predictive mean $\hat{\mu}_\theta(\boldsymbol{x}) \approx \frac{1}{M}\sum_{m=1}^{M} y^{(m)}$ by generating $M$ short-run Monte Carlo (MC) samples $y^{(m)} \sim p_\theta(\cdot\mid \boldsymbol{x})$:
\begin{equation}
\label{eq:l_calib}
\begin{aligned}
\mathcal{L}_{\text{calib}}(\theta;\mathcal{B})
=
\underbrace{
\frac{1}{|\mathcal{B}|}
\sum_{(\boldsymbol{x},y)\in\mathcal{B}}
\big(\hat{\mu}_\theta(\boldsymbol{x})-y\big)^2
}_{\mathclap{\text{1st-order moment matching}}}
\\
+\underbrace{
\frac{1}{|\mathcal{P}|}
\sum_{(i,j)\in\mathcal{P}}
\log\!\big(
1+\exp\{-s[\hat{\mu}_\theta(\boldsymbol{x}_i)-\hat{\mu}_\theta(\boldsymbol{x}_j)]\}
\big)
}_{\mathclap{\text{rank consistency}}},
\end{aligned}
\end{equation}
where $\mathcal{P} = \{(i, j) \in \mathcal{B} \times \mathcal{B} \mid y_i > y_j\}$ denotes the set of strictly ordered pairs in the mini-batch, and $s>0$ is a temperature scaling factor.
The first term anchors the predictive mean to the ground truth, while the second term enforces ranking consistency, ensuring that the surrogate's topology aligns with the true objective landscape.
We set $s=1$ in our experiments across all tasks.

\subsection{Support-Proximity Regularization}
\label{sec:method:prox}

A calibrated surrogate is necessary but insufficient for offline BBO due to the OOD problem: optimizers invariably exploit regions where the model overestimates performance.
To mitigate this, we must enforce the prior constraint $p(\boldsymbol{x})$.
Instead of training a separate generative model for $p(\boldsymbol{x})$, which is computationally expensive and hard to tune, SPADE introduces a geometric regularization mechanism that effectively internalizes this prior into the surrogate's predictions.

\paragraph{Measuring Support Proximity.}
We utilize the $k$-nearest neighbor (kNN) distance as a robust, non-parametric proxy for the support density.
For a query $\boldsymbol{x}$, let $R_k(\boldsymbol{x})$ denote the Euclidean distance to its $k$-th nearest neighbor in $\mathcal{D}$.
From Eq.~\eqref{eq:knn-density}, the negative log-density is proportional to the log-distance: $-\log \hat{p}_{\text{knn}}(\boldsymbol{x}) \propto d(\boldsymbol{x})$, where $d(\boldsymbol{x}) = \log R_k(\boldsymbol{x})$.
Designs with large $d(\boldsymbol{x})$ are far from the data manifold and thus unreliable.

\paragraph{Proximity-Based Mean Shrinking and Variance Inflation.}
We propose \emph{Support-Proximity Regularization} to penalize the model in low-density regions.
Intuitively, for OOD designs, we force the surrogate to be \emph{conservative}: predicting a lower mean and a higher uncertainty.
For a mini-batch $\mathcal{B}$, utilizing the sample statistics $\hat{\mu}_\theta(\boldsymbol{x})$ and $\hat{\sigma}_\theta(\boldsymbol{x})$ estimated from $M$ MC draws, the loss is defined as:
\begin{equation}
\label{eq:l_prox}
\begin{aligned}
\mathcal{L}_{\text{prox}}(&\theta;\mathcal{B})
=\frac{1}{|\mathcal{B}|}\sum_{\boldsymbol{x}\in\mathcal{B}}\\
&\Big[
\underbrace{\max\big(0,\,
\hat{\mu}_\theta(\boldsymbol{x})
-\mu_{\text{NN}}(\boldsymbol{x})
-\tau(d(\boldsymbol{x}))
\big)
}_{\mathclap{\text{mean-shrink}}}
\\
&\qquad
+\underbrace{
\max\big(0,\,
\sigma_{\min}(d(\boldsymbol{x}))
-\hat{\sigma}_\theta(\boldsymbol{x})
\big)
}_{\mathclap{\text{variance-floor}}}
\Big],
\end{aligned}
\end{equation}
where $\mu_{\text{NN}}(\boldsymbol{x})$ is the average score of neighbors, $\tau(d)=a\, d$ is a distance-dependent penalty margin, and $\sigma_{\min}(d)=a_0+a_1 d$ sets a distance-dependent uncertainty floor.
Unless otherwise specified, we fix the hyperparameters to $a=0.02$, $a_0=0.02$, and $a_1=0.005$ across all experiments without task-specific tuning.

\paragraph{Theoretical Analysis.}
As discussed in Section~\ref{sec:prelim:optimization}, acquisition functions $\mathcal{A}(\mu, \sigma)$ play a pivotal role in guiding the optimization.
Our proposed regularization explicitly aligns this acquisition process with the unified probabilistic view in Eq.~\eqref{eq:bayes-factorization}. We establish this connection through the following proposition:

\begin{proposition}[First-order Equivalence to Prior Augmentation]
\label{prop:equivalence}
Let $\mathcal{A}(\mu,\sigma)$ be a continuously differentiable acquisition function that is strictly increasing in $\mu$ and strictly decreasing in $\sigma$, i.e., $\partial_\mu \mathcal{A}>0$ and $\partial_\sigma \mathcal{A}<0$.

For a design $\boldsymbol{x}$, write $\mu=\hat{\mu}_\theta(\boldsymbol{x})$ and $\sigma=\hat{\sigma}_\theta(\boldsymbol{x})$, and define the support distance $d(\boldsymbol{x})=\log R_k(\boldsymbol{x})$.

\smallskip
\noindent
\textbf{Support transform:}
\begin{align*}
T_{d}:(\mu,\sigma) &\mapsto \big(\mu-\tau(d),\ \max\{\sigma,\ \sigma_{\min}(d)\}\big), \\
\tau(d) &= a\,d,\quad \sigma_{\min}(d)=a_0+a_1 d,
\end{align*}
with $a,a_0,a_1\ge 0$.

\smallskip
\noindent
\textbf{Support-aware acquisition:}
\begin{equation*}
\widetilde{\mathcal{A}}(\boldsymbol{x}) = \mathcal{A}\!\left( T_{d(\boldsymbol{x})}(\mu,\sigma) \right).
\end{equation*}

Then there exist constants $C(\boldsymbol{x})\in\mathbb{R}$ and a coefficient $\kappa(\boldsymbol{x})$ given by:
\begin{equation*}
\kappa(\boldsymbol{x}) = \frac{1}{D} \Big( a\,\partial_\mu \mathcal{A}(\mu,\sigma) - a_1\,\partial_\sigma \mathcal{A}(\mu,\sigma)\,\mathbb{I}\{\sigma<a_0\} \Big) > 0,
\end{equation*}
where $D$ is the dimension of designs.

\smallskip
\noindent
As $d(\boldsymbol{x})\to 0$, noticing that $d(\boldsymbol{x}) \propto -\log \hat{p}_{\text{knn}}(\boldsymbol{x})$, we have:
\begin{equation}
\label{eq:prop-equivalence}
\begin{aligned}
\widetilde{\mathcal{A}}(\boldsymbol{x}) = \mathcal{A}\big(\mu,\sigma\big) + \kappa(\boldsymbol{x})\, \log \hat{p}_{\text{knn}}(\boldsymbol{x}) \\ + C(\boldsymbol{x}) + o\!\big(\log \hat{p}_{\text{knn}}(\boldsymbol{x})\big).
\end{aligned}
\end{equation}

\textbf{Consequently, maximizing the support-aware acquisition $\widetilde{\mathcal{A}}(\boldsymbol{x})$ is, to first order, equivalent to maximizing the joint objective of utility and prior probability:}
\begin{equation*}
\underbrace{\mathcal{A}(\mu,\sigma)}_{\text{Likelihood Utility}} + \underbrace{\kappa(\boldsymbol{x})\, \log \hat{p}_{\text{knn}}(\boldsymbol{x})}_{\text{Prior Constraint}}.
\end{equation*}
\end{proposition}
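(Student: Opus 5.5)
The argument is a first-order Taylor expansion of $\mathcal{A}$ around the untransformed point $(\mu,\sigma)$ in the small parameter $d=d(\boldsymbol{x})$. The kNN density of Eq.~\eqref{eq:knn-density} then lets us rewrite $d$ as an affine function of $\log\hat{p}_{\text{knn}}(\boldsymbol{x})$. From Eq.~\eqref{eq:knn-density},
\begin{equation*}
\log \hat{p}_{\text{knn}}(\boldsymbol{x}) = \log\frac{k}{N V_D} - D\,d(\boldsymbol{x}),
\qquad\text{so}\qquad
d(\boldsymbol{x}) = -\tfrac{1}{D}\log\hat{p}_{\text{knn}}(\boldsymbol{x}) + \tfrac{1}{D}\log\tfrac{k}{NV_D}.
\end{equation*}
The additive constant here is absorbed into $C(\boldsymbol{x})$, and the factor $1/D$ in $\kappa$ comes from this step. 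I read the asymptotic regime "$d\to 0$" as a statement about the first-order coefficient in $d$. The remainder $o(d)$ is therefore the same as $o$ of the centred log-density, and I state it that way so the bookkeeping with the constant is consistent.

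\emph{Step 1: expand the mean shift.} The first component of $T_d$ is $\mu-a d$. Since $\mathcal{A}$ is $C^1$, we get $\mathcal{A}(\mu-ad,\cdot)=\mathcal{A}(\mu,\cdot)-a\,\partial_\mu\mathcal{A}\,d+o(d)$.

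\emph{Step 2: handle the variance floor by cases.} The map $\sigma\mapsto\max\{\sigma,a_0+a_1d\}$ is not differentiable, so I split on how $\sigma$ compares with $a_0$.
\begin{itemize}
\item If $\sigma>a_0$, then for $d$ small enough $\sigma>a_0+a_1 d$. The floor is inactive and contributes nothing.
\item If $\sigma<a_0$, the floor is active near $d=0$. The second argument jumps to $a_0+a_1 d$. Expanding around $(\mu,a_0)$ gives $\mathcal{A}(\mu,a_0)+a_1\partial_\sigma\mathcal{A}\,d+o(d)$. The zeroth-order discrepancy $\mathcal{A}(\mu,a_0)-\mathcal{A}(\mu,\sigma)$ is independent of $d$ and goes into $C(\boldsymbol{x})$.
\end{itemize}
This is why the indicator $\mathbb{I}\{\sigma<a_0\}$ appears and why $C(\boldsymbol{x})$ must be allowed to depend on $\boldsymbol{x}$. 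Strictly, in the active case the derivative is evaluated at $(\mu,a_0)$ rather than $(\mu,\sigma)$. I would either note this, or assume continuity to identify the two up to the stated error.

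\emph{Step 3: combine and sign-check.} Adding the two expansions gives
\begin{equation*}
\widetilde{\mathcal{A}}=\mathcal{A}(\mu,\sigma)+C-\bigl(a\partial_\mu\mathcal{A}-a_1\partial_\sigma\mathcal{A}\,\mathbb{I}\{\sigma<a_0\}\bigr)d+o(d).
\end{equation*}
Substituting $d=-\tfrac1D\log\hat p_{\text{knn}}+\text{const}$ yields $+\kappa\log\hat p_{\text{knn}}$, which is Eq.~\eqref{eq:prop-equivalence}. For positivity, $\partial_\mu\mathcal{A}>0$ and $-\partial_\sigma\mathcal{A}>0$ make both terms nonnegative, so $\kappa>0$ whenever $a>0$, or $a_1>0$ with $\sigma<a_0$. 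The statement's strict inequality implicitly needs this nondegeneracy, and I would add it as a remark.

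The main obstacle is Step 2. The nonsmooth $\max$, together with the boundary case $\sigma=a_0$, is where the expansion could fail to be first-order. I would exclude $\sigma=a_0$, a measure-zero set, or use one-sided derivatives there. The final "consequently" clause then follows immediately, because $C(\boldsymbol{x})$ collects only terms independent of $d$ once $(\mu,\sigma)$ is fixed.
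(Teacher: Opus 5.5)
Your proof follows the paper's: a first-order expansion in $d$, the same case split on $\sigma$ versus $a_0$ to handle the $\max$, and the substitution $d=\frac{1}{D}\bigl(-\log\hat{p}_{\text{knn}}(\boldsymbol{x})-C_0\bigr)$ with the constant folded into $C(\boldsymbol{x})$. Your side remarks are correct, and they repair loose points in the paper's own write-up:
\begin{itemize}
\item The paper puts $\sigma=a_0$ into Case~1, where $a_1 d\le\sigma-a_0$ fails for $d>0$, $a_1>0$.
\item It concludes $\kappa>0$ from mere nonnegativity of $a,a_1$.
\item The remainder should be $o(d)$, as in the appendix's $o(\tau(d))$. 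Since $\log\hat{p}_{\text{knn}}\to\log\frac{k}{NV_D}$, which need not vanish, $o(\log\hat{p}_{\text{knn}})$ is the wrong bookkeeping.
\end{itemize}

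You diverge from the paper in Case~2, and there your second option does not work. The paper expands around $(\mu,\sigma)$ with $\delta\sigma=(a_0-\sigma)+a_1 d$. It absorbs the linearized jump $\partial_\sigma\mathcal{A}(\mu,\sigma)(a_0-\sigma)$ into $C(\boldsymbol{x})$ and so gets the stated $\kappa$ with partials at $(\mu,\sigma)$. This tacitly treats $a_0-\sigma$ as a small perturbation. Otherwise the change in the partials between $(\mu,\sigma)$ and $(\mu,a_0)$ enters the coefficient of $d$ and is not a higher-order term.

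You instead expand exactly around $(\mu,a_0)$. That gives the true coefficient $-a\,\partial_\mu\mathcal{A}(\mu,a_0)+a_1\,\partial_\sigma\mathcal{A}(\mu,a_0)$. Note that $\partial_\mu$ also moves to $(\mu,a_0)$, not only $\partial_\sigma$. Continuity of the partials does not let you replace these by their values at $(\mu,\sigma)$ ``up to the stated error''. The difference multiplies $d$, so it is $O(d)$, not $o(d)$, and it is small only if $a_0-\sigma$ is small.

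So take your first option. Either state $\kappa$ with the partials evaluated at $(\mu,\max\{\sigma,a_0\})$, or adopt the paper's implicit assumption that $a_0-\sigma$ is small. For the LCB $\mathcal{A}=\mu-\beta\sigma$ used in the experiments, the partials are constant and the two versions coincide.

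Finally, your justification of the ``consequently'' clause holds only at fixed $(\mu,\sigma)$. Since $C(\boldsymbol{x})$ and $\kappa(\boldsymbol{x})$ vary with $\boldsymbol{x}$ through $\sigma$, it is not an equivalence of maximizers over $\boldsymbol{x}$. The paper's proof makes the same informal leap.
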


This confirms that SPADE effectively bridges the gap between forward modeling and Bayesian inference: Section~\ref{sec:method:calib} maximizes the likelihood $p(y|\boldsymbol{x})$, while Section~\ref{sec:method:prox} injects the log-prior $\log p(\boldsymbol{x})$, collectively optimizing the posterior target in Eq.~\eqref{eq:bayes-factorization}. 
We emphasize that Proposition~\ref{prop:equivalence} serves a theoretical motivation for the regularization design rather than a direct justification of the full algorithm's behavior.
We provide the detailed proof in Appendix~\ref{appendix:proof}.

\subsection{Training Objective and Inference}
\label{sec:method:objective}

\paragraph{Training Objective.}
We jointly train the diffusion surrogate by minimizing the total objective, which combines the base denoising loss with the proposed calibration and proximity regularizations:
\begin{equation}
\label{eq:total-loss}
\mathcal{L}(\theta)
=\mathcal{L}_{\text{diff}}(\theta)
+\lambda_{1}\,\mathcal{L}_{\text{calib}}(\theta;\mathcal{B})
+\lambda_{2}\,\mathcal{L}_{\text{prox}}(\theta;\mathcal{B}),
\end{equation}
where $\lambda_{1}$ and $\lambda_{2}$ are hyperparameters controlling the strength of regularizations.

\paragraph{Inference and Optimization.}
Unlike deterministic methods, our probabilistic surrogate enables global search via acquisition maximization.
To strictly enforce conservatism against OOD risks, we employ the Lower Confidence Bound (LCB) as the acquisition function.
Formally, given a hyperparameter $\beta > 0$, the LCB of a candidate $\boldsymbol{x}$ is estimated using $M$ Monte Carlo samples:
\begin{align}
&\mathrm{LCB}(\boldsymbol{x}) = \hat{\mu}_\theta(\boldsymbol{x}) - \beta \hat{\sigma}_\theta(\boldsymbol{x}) \\
&\approx \frac{1}{M}\sum_{m=1}^{M} y^{(m)} - \beta \sqrt{\frac{1}{M-1}\sum_{m=1}^{M} (y^{(m)} - \bar{y})^2},
\label{eq:lcb_mc}
\end{align}
where $y^{(m)} \sim p_\theta(\cdot \mid \boldsymbol{x})$, and $\bar{y}$ is the mean of the $M$ Monte Carlo samples.
Crucially, thanks to the support-proximity regularization ($\mathcal{L}_{\text{prox}}$), OOD candidates naturally yield lower LCB values due to suppressed means and inflated variances, effectively guiding the search towards high-density regions.
To maximize Eq.~\eqref{eq:lcb_mc}, we employ a standard evolutionary algorithm (EA) initialized with high-scoring seeds from $\mathcal{D}$.
\section{Experiments}
\label{sec:experiments}

\begin{table*}[t]
\centering
\caption{Experiment results. We report the \textbf{Normalized Maximum Score} ($100^{\text{th}}$ percentile) among $K=128$ candidates. Results are averaged over 8 random seeds (mean $\pm$ standard error). \textbf{Bold} indicates the best performance. \underline{Underline} presents the second best method.} 
\label{tab:main_results}
\resizebox{\textwidth}{!}{%
\begin{tabular}{l||cccccc||cc}
\toprule
\textbf{Method} & \textbf{SuperC} & \textbf{Ant} & \textbf{D'Kitty} & \textbf{LLM-DM} & \textbf{TF8} & \textbf{TF10} & \textbf{Mean Rank} & \textbf{Median Rank}\\
\midrule
$\mathcal{D}(\text{best})$ & $0.399$ & $0.565$ & $0.884$ & $1.000$ & $0.439$ & $0.511$ & $-$ & $-$ \\
\midrule
\multicolumn{8}{l}{\textit{Standard Optimization Baselines}} \\
\midrule
CMA-ES & $0.465 \pm 0.024$ & $\mathbf{1.561 \pm 0.896}$ & $0.724 \pm 0.001$ & $0.975\pm 0.016$ & $0.939 \pm 0.039$ & $0.692 \pm 0.013$ & $13.3/24$ & $13.5/24$ \\
REINFORCE & $0.481 \pm 0.013$ & $0.263 \pm 0.026$ & $0.573 \pm 0.204$ & $0.305\pm 0.027$ & $0.961 \pm 0.034$ & $0.632 \pm 0.012$ & $19.8/24$ & $23.0/24$ \\
BO-qEI & $0.402 \pm 0.034$ & $0.812 \pm 0.000$ & $0.896 \pm 0.000$ & $0.953\pm 0.022$ & $0.825 \pm 0.091$ & $0.663 \pm 0.011$ & $18.7/24$ & $19.5/24$ \\
\midrule
\multicolumn{8}{l}{\textit{Forward Surrogate Methods}} \\
\midrule
Standard GA & $0.505 \pm 0.013$ & $0.293 \pm 0.029$ & $0.860 \pm 0.021$ & $0.998 \pm 0.000$ & $0.923 \pm 0.011$ & $0.732 \pm 0.041$ & $14.0/24$ & $14.5/24$ \\
GA on GP & $0.499 \pm 0.019$ & $0.948 \pm 0.013$ & $0.946 \pm 0.001$ & $0.846\pm 0.029$ & $0.770 \pm 0.087$ & $0.599 \pm 0.004$ & $16.0/24$ & $17.0/24$ \\
MC-Dropout & $0.535 \pm 0.064$ & $0.805 \pm 0.021$ & $0.934 \pm 0.004$ & $0.781 \pm 0.251$ & $0.911 \pm 0.042$ & $0.817 \pm 0.031$ & $13.0/24$ & $13.0/24$ \\
COMs & $0.481 \pm 0.028$ & $0.878 \pm 0.031$ & $0.929 \pm 0.016$ & $0.815\pm 0.008$ & $0.937 \pm 0.025$ & $0.755 \pm 0.017$ & $15.3/24$ & $15.5/24$ \\
RoMA & $0.509 \pm 0.015$ & $0.592 \pm 0.059$ & $0.825 \pm 0.016$ & $0.968\pm 0.011$ & $0.662 \pm 0.000$ & $0.801 \pm 0.000$ & $16.2/24$ & $16.0/24$ \\
ICT & $0.503 \pm 0.017$ & $0.911 \pm 0.030$ & $0.945 \pm 0.011$ & $0.839\pm 0.021$ & $0.888 \pm 0.047$ & $0.814 \pm 0.027$ & $14.0/24$ & $13.5/24$ \\
Tri-mentoring & $0.514 \pm 0.018$ & $0.944 \pm 0.033$ & $0.950 \pm 0.015$ & $0.750\pm 0.002$ & $0.899 \pm 0.045$ & $0.811 \pm 0.039$ & $12.2/24$ & $9.5/24$ \\
BDI & $0.513 \pm 0.000$ & $0.964 \pm 0.000$ & $0.941 \pm 0.000$ & $0.988\pm 0.025$ & $0.973 \pm 0.000$ & \underline{$0.882 \pm 0.000$} & $5.7/24$ & $4.0/24$ \\
LTR & $0.514 \pm 0.022$ & $0.904 \pm 0.036$ & $0.958 \pm 0.012$ & $0.982\pm 0.041$ & $0.973 \pm 0.010$ & $0.849 \pm 0.023$ & $6.2/24$ & $5.0/24$ \\
MATCH-OPT & $0.504 \pm 0.021$ & $0.931 \pm 0.011$ & $0.957 \pm 0.014$ & $0.850\pm 0.033$ & $0.977 \pm 0.004$ & $0.824 \pm 0.008$ & $9.2/24$ & $8.5/24$ \\
PGS & $\mathbf{0.563 \pm 0.058}$ & $0.949 \pm 0.017$ & $0.966 \pm 0.013$ & $0.595\pm 0.027$ & \underline{$0.981 \pm 0.015$} & $0.793 \pm 0.021$ & $8.0/24$ & $4.0/24$ \\
\midrule
\multicolumn{8}{l}{\textit{Inverse Generative Methods}} \\
\midrule
CbAS & $0.503 \pm 0.069$ & $0.846 \pm 0.033$ & $0.895 \pm 0.016$ & $0.919\pm 0.043$ & $0.903 \pm 0.028$ & $0.652 \pm 0.032$ & $16.2/24$ & $16.0/24$ \\
MINs & $0.499 \pm 0.017$ & $0.894 \pm 0.022$ & $0.939 \pm 0.004$ & $0.982\pm 0.007$ & $0.908 \pm 0.063$ & $0.647 \pm 0.012$ & $14.0/24$ & $14.0/24$ \\
DDOM & $0.481 \pm 0.015$ & $0.926 \pm 0.025$ & $0.923 \pm 0.009$ & $0.983\pm 0.028$ & $0.884 \pm 0.042$ & $0.708 \pm 0.015$ & $14.7/24$ & $17.0/24$ \\
GABO & $0.508 \pm 0.007$ & $0.224 \pm 0.051$ & $0.719 \pm 0.001$ & $0.975\pm 0.019$ & $0.939 \pm 0.038$ & $0.739 \pm 0.009$ & $15.0/24$ & $13.5/24$ \\
GTG & $0.480 \pm 0.055$ & $0.865 \pm 0.040$ & $0.935 \pm 0.010$ & $0.910\pm 0.030$ & $0.901 \pm 0.039$ & $0.801 \pm 0.004$ & $15.0/24$ & $14.0/24$ \\
RGD & $0.515 \pm 0.011$ & $0.922 \pm 0.020$ & $0.883 \pm 0.014$ & \underline{$1.004\pm 0.008$} & $0.889 \pm 0.068$ & $0.825 \pm 0.039$ & $10.7/24$ & $9.5/24$ \\
BONET & $0.434 \pm 0.021$ & $0.948 \pm 0.025$ & $0.955 \pm 0.010$ & $0.874\pm 0.039$ & $0.894 \pm 0.086$ & $0.796 \pm 0.018$ & $13.3/24$ & $14.5/24$ \\
DEMO & $0.520 \pm 0.012$ & $0.948 \pm 0.013$ & $0.954 \pm 0.013$ & $0.907\pm 0.005$ & $0.808 \pm 0.044$ & $0.836 \pm 0.042$ & $9.7/24$ & $6.5/24$ \\
ROOT & $0.525 \pm 0.012$ & $0.965 \pm 0.014$ & \underline{$0.972 \pm 0.005$} & $0.905\pm 0.006$ & $\mathbf{0.986 \pm 0.007}$ & $0.833 \pm 0.046$ & \underline{$5.0/24$} & \underline{$3.5/24$} \\
\midrule
\midrule
\textbf{SPADE (Ours)} & $\underline{0.546 \pm 0.013}$ & \underline{$0.978 \pm 0.006$} & $\mathbf{0.981 \pm 0.003}$ & $\mathbf{1.019 \pm 0.064}$ & $0.923 \pm 0.015$ & $\mathbf{0.915 \pm 0.010}$ & $\mathbf{2.8/24}$ & $\mathbf{1.5/24}$ \\
\bottomrule
\end{tabular}%
}
\end{table*}

In this section, we empirically evaluate SPADE to answer three primary research questions.
First, \textbf{RQ1} investigates whether SPADE outperforms existing state-of-the-art offline black-box optimization methods in discovering high-scoring designs.
Second, \textbf{RQ2} examines the individual contributions of the proposed \emph{Calibrated Diffusion Estimation} and \emph{Support-Proximity Regularization} modules to the overall performance.
Third, \textbf{RQ3} explores the robustness of our framework with respect to the choice of optimization strategy, specifically testing whether SPADE remains effective when the default Lower Confidence Bound (LCB) is replaced with other acquisition functions.

To answer these questions, we structure our analysis as follows.
We begin by detailing the experimental setup and baselines in Section~\ref{sec:exp:setup}.
In Section~\ref{sec:exp:main_results}, we present a comprehensive quantitative evaluation on the Design-Bench suite and an LLM data mixture optimization task to address \textbf{RQ1}.
We then address \textbf{RQ2} through ablation studies isolating each component in Section~\ref{sec:exp:ablation}.
Finally, results for \textbf{RQ3}, along with computational cost analyses and qualitative visualizations on synthetic 2D functions, are detailed in Appendices~\ref{appendix:alt_acq}, \ref{appendix:runtime}, and \ref{appendix:qualitative}.


\subsection{Experimental Setup}
\label{sec:exp:setup}

\paragraph{Datasets.}
We evaluate SPADE on a diverse suite of six offline optimization tasks, spanning both continuous and discrete design spaces.
For the continuous domain, we utilize three tasks from Design-Bench~\citep{trabucco2022designbench}.
(1) \textit{Superconductor} (SuperC)~\citep{hamidieh2018data} contains $17,014$ designs with $86$ components, where the goal is to synthesize a superconducting material with a maximized critical temperature.
(2) \textit{Ant Morphology} (Ant)~\citep{brockman2016openai} and (3) \textit{D'Kitty Morphology} (D'Kitty)~\citep{ahn2020robel} focus on robot morphology optimization, consisting of $10,004$ designs with $60$ and $56$ dimensions, respectively. The objective for Ant is to design a quadruped robot that crawls efficiently, while D'Kitty aims to create a robot capable of navigating to a target location.
Additionally, we integrate (4) \textit{LLM Data Mixture} (LLM-DM)~\citep{yen2025data}, a high-dimensional continuous task ($472$ designs, $5$ dimensions) where the objective is to optimize pre-training data weights across five domains (Wikipedia, Books, Github, StackExchange, and ArXiv) to maximize downstream model performance.
For discrete optimization, we select the (5) \textit{TF Bind 8} (TF8)~\citep{barrera2016survey} and (6) \textit{TF Bind 10} (TF10)~\citep{daniel2018comprehensive} tasks, which aim to discover high-affinity DNA sequences of lengths $8$ and $10$, utilizing datasets of $32,898$ and $50,000$ samples, respectively.
Note that for TF10, the original labels represent binding free energy differences (ddG), where lower values indicate stronger binding; consistent with the task description in Design-Bench, we negate these values so that higher scores correspond to stronger binding affinity.

\paragraph{Baselines.}
We compare SPADE against a comprehensive set of $23$ baseline methods covering the full spectrum of offline BBO paradigms.
\textbf{(i) Standard Optimization:}
(1) \textit{CMA-ES}~\citep{hansen2006cma} iteratively adapts a multivariate normal distribution via covariance matrix updates to converge toward optimal solutions.
(2) \textit{REINFORCE}~\citep{williams1992simple} treats design generation as a reinforcement learning problem.
(3) \textit{BO-qEI}~\citep{wilson2017reparameterization} executes Bayesian Optimization using a Gaussian Process surrogate to value candidates via the q-Expected Improvement acquisition function.
\textbf{(ii) Forward Surrogate-Based Approaches:}
(4) \textit{Standard GA} searches for optimal designs using gradient ascent applied to a deterministic neural network surrogate.
(5) \textit{GA on GP} employs gradient ascent to maximize the predictive mean of a Gaussian Process surrogate.
(6) \textit{MC-Dropout}~\citep{gal2016mcdropout} utilizes dropout during inference to approximate epistemic uncertainty, guiding risk-aware optimization.
(7) \textit{COMs}~\citep{trabucco2021conservative} regularizes the surrogate by penalizing the predicted scores of adversarial candidates generated via gradient ascent to mitigate overestimation.
(8) \textit{RoMA}~\citep{yu2021roma} enhances the surrogate by enforcing local smoothness regularization.
(9) \textit{ICT}~\citep{yuan2023importance} mitigates distribution shifts by leveraging pseudo-labeling to incorporate useful information from synthetic candidates.
(10) \textit{Tri-mentoring}~\citep{chen2023parallelmentoring} utilizes a tri-mentoring mechanism where an ensemble of surrogates improves robustness by filtering and exchanging pseudo-labels based on pairwise comparisons.
(11) \textit{BDI}~\citep{chen2023bidirectional} enforces consistency between a forward performance predictor and a backward design generator to effectively transfer knowledge from the offline dataset to new designs.
(12) \textit{LTR}~\citep{tan2025offline} replaces the traditional regression objective with a ranking-based loss, prioritizing the correct relative ordering of designs over precise value prediction.
(13) \textit{MATCH-OPT}~\citep{hoang2024matchopt} aligns the surrogate's gradient field with the latent gradient field of the data support to minimize the distribution shift of generated candidates.
(14) \textit{PGS}~\citep{chemingui2024offlinemodelbasedoptimizationpolicyguided} learns an optimization policy guided by a surrogate that incorporates perturbation-based smoothing to improve local exploration.
\textbf{(iii) Inverse Generative Modeling:}
(15) \textit{CbAS}~\citep{brookes2019conditioning} trains a generative model re-weighted by the probability that samples satisfy a desired high-performance condition to strictly stay within the data manifold.
(16) \textit{MINs}~\citep{kumar2020model} learns an explicit inverse mapping from performance scores to designs using a conditional GAN, allowing direct sampling of high-scoring candidates.
(17) \textit{DDOM}~\citep{krishnamoorthy2023diffusion} models the inverse conditional distribution $p(\boldsymbol{x}|y)$ using a diffusion model to generate candidates conditioned on high target values.
(18) \textit{GABO}~\citep{yao2024generative} explores the latent space of a GAN using Bayesian Optimization, constrained by an adaptive source critic regularization to ensure reliability.
(19) \textit{GTG}~\citep{yun2024guidedtrajectorygenerationdiffusion} trains a conditional diffusion model on synthetic trajectories constructed to lead toward high-scoring regions, utilizing guidance to extrapolate beyond the dataset.
(20) \textit{RGD}~\citep{chen2024robust} enhances a proxy-free diffusion model with explicit guidance from a surrogate proxy, which is iteratively refined using diffusion-generated samples.
(21) \textit{BONET}~\citep{krishnamoorthy2023generativepretrainingblackboxoptimization} generates designs using an autoregressive model that focuses on high-value regions identified by a separate regressor.
(22) \textit{DEMO}~\citep{yuan2025design} explicitly trains a diffusion model to capture the design distribution $p(\boldsymbol{x})$.
(23) \textit{ROOT}~\citep{dao2025root} formulates optimization as a distributional translation problem, learning a probabilistic bridge to transform the distribution of low-scoring data into high-scoring candidates.

\paragraph{Evaluation Metrics.}
Following standard protocols~\citep{trabucco2022designbench}, we report the \textit{Normalized Maximum Score} ($100$th percentile) achieved by the top candidate suggested by each method, given a query budget of $128$.
For a discovered design with a raw score $y$, the normalized score is calculated as $(y - y_{\min}) / (y_{\max} - y_{\min})$, where $y_{\min}$ and $y_{\max}$ represent the minimum and maximum scores available in the entire dataset (including those unseen during training).
All results are reported as the mean and standard error across $8$ independent random seeds.
Additional results for the \textit{Normalized Median Score} ($50$th percentile) are provided in Appendix~\ref{appendix:median_results}.
We also report the normalized property score of the best design in the static dataset $\mathcal{D}$ as $\mathcal{D}$(best) for reference.
We directly cite baseline results from \citet{dao2025root} when available. Otherwise, we run the baselines following their original settings for $8$ trials and report the mean and standard error.

\paragraph{Implementation Details.}
The score network $\epsilon_\theta(\cdot)$ in SPADE is a $3$-layer MLP with $2048$ hidden units and SiLU activations.
We train the diffusion model for $100$ epochs using the Adam optimizer~\citep{kingma2015adam} with a learning rate of $10^{-3}$, a batch size of $64$, and a linear variance schedule ranging from $\beta_{\text{start}}=10^{-4}$ to $\beta_{\text{end}}=2\times 10^{-2}$ over $100$ diffusion steps.
We convert discrete designs to continuous logits~\citep{trabucco2022designbench}.
The regularization weights $\lambda_1$ and $\lambda_2$ are detailed in Appendix~\ref{appendix:hyperparameters}.
For the rank loss, we construct $32$ pairs per batch with a temperature of $s=1.0$.
Support proximity is calculated using $k=10$ nearest neighbors, with penalty parameters fixed at $a=0.02$, $a_0=0.02$, and $a_1=0.005$.
During training, we estimate statistical moments using $8$ short-run Monte Carlo samples generated via $10$ DDIM~\citep{song2021denoising} steps.
For optimization, we employ a Genetic Algorithm with a population size of $128$ and $64$ elites for $100$ generations.
The Genetic Algorithm maximizes the LCB acquisition function ($\beta=1.0$) computed from $256$ Monte Carlo samples per candidate.
The mutation noise is initialized at $\sigma=0.12$ and decays to $0.02$.
All experiments were conducted on a workstation equipped with a single NVIDIA A100 GPU.

%
\subsection{Main Results and Analysis}
\label{sec:exp:main_results}

We present the quantitative results of our comparative study in Table~\ref{tab:main_results}, answering \textbf{RQ1}.
Overall, SPADE achieves state-of-the-art performance, securing the best \textbf{Mean Rank of 2.8} and \textbf{Median Rank of 1.5} among all 24 evaluated methods.
Notably, it achieves the top $2$ normalized maximum scores on \textbf{5 out of 6 tasks} (\textit{SuperC}, \textit{Ant}, \textit{D'Kitty}, \textit{LLM-DM}, and \textit{TF10}).
This consistent superiority across diverse domains validates SPADE's ability to reliably identify high-performing designs in out-of-distribution regions without succumbing to the instability observed in other baselines.

\paragraph{Performance on Continuous Tasks.}
SPADE demonstrates exceptional extrapolation capabilities in high-dimensional continuous search spaces.
On the robot morphology tasks, SPADE outperforms all robust offline baselines.
Specifically, on \textit{Ant Morphology}, SPADE achieves a score of $0.978 \pm 0.006$.
It is worth noting that while CMA-ES achieves a higher mean score, it exhibits an extremely high standard error, indicating severe instability. This suggests its performance is driven by stochastic outliers rather than reliable optimization.
In contrast, SPADE delivers consistently high performance with low variance, significantly surpassing contenders like ROOT.
On \textit{D'Kitty Morphology}, SPADE obtains the highest score of $0.981$, edging out the recent state-of-the-art (SOTA) method ROOT, while CMA-ES lags significantly behind.
These results suggest that our support-proximity regularization effectively penalizes the over-optimistic "hallucinations" that often plague standard optimization methods on learned surrogates.
Furthermore, on the \textit{LLM Data Mixture} task, SPADE achieves a remarkable score of $1.019 \pm 0.064$, effectively discovering data weighting schemes that outperform the best configurations in the offline dataset.
It outperforms other strong generative baselines like RGD and diffusion-based DDOM, highlighting its efficacy in LLM pre-training applications.
On \textit{Superconductor}, SPADE remains highly competitive, surpassing ROOT, though it slightly trails PGS.

\paragraph{Performance on Discrete Tasks.}
In the discrete domain, SPADE exhibits strong robustness, particularly on harder tasks.
On \textit{TF Bind 10}, which involves a larger search space than TF Bind 8, SPADE achieves a dominant score of $0.915 \pm 0.010$, outperforming the runner-up BDI by a clear margin.
This result is particularly notable given that many generative baselines struggle here (e.g., MINs achieves $0.647$, CbAS achieves $0.652$) and standard optimizers like CMA-ES perform poorly.
This indicates that our forward diffusion modeling approach avoids the mode collapse often seen in inverse methods and scales better to complex discrete landscapes than standard ensemble methods.
On \textit{TF Bind 8}, SPADE remains competitive with a score of $0.923$, though it is outperformed by some recent SOTA methods.
We hypothesize that for shorter sequences with simpler landscapes, distribution translation (ROOT) or distribution matching (MATCH-OPT) may offer slight advantages, whereas SPADE's probabilistic conservatism shines more in complex scenarios like TF10.

\paragraph{Summary.}
In conclusion, SPADE establishes a new state-of-the-art benchmark in offline BBO. By effectively balancing the exploration of high-scoring regions with rigorous support-proximity constraints, it delivers robust and superior performance across a wide spectrum of tasks.

\subsection{Ablation Studies}
\label{sec:exp:ablation}
\begin{table}[t]
\centering
\caption{Ablation study on all tasks. We report Normalized Maximum Score among $K{=}128$ candidates. Results are mean $\pm$ standard error over $8$ seeds.}
\label{tab:ablation}
\resizebox{\columnwidth}{!}{%
\begin{tabular}{l||ccc|c}
\toprule
\textbf{Task} & \textbf{Base} & \textbf{w/o Prox} & \textbf{w/o Calib} & \textbf{Full} \\
\midrule
SuperC & $0.519 \pm 0.012$ & $0.538 \pm 0.011$ & $0.542 \pm 0.010$ & $\mathbf{0.546 \pm 0.013}$ \\
Ant & $0.932 \pm 0.011$ & $0.952 \pm 0.008$ & $0.963 \pm 0.007$ & $\mathbf{0.978 \pm 0.006}$ \\
D'Kitty & $0.962 \pm 0.006$ & $0.972 \pm 0.005$ & $0.975 \pm 0.004$ & $\mathbf{0.981 \pm 0.003}$ \\
LLM-DM & $0.957 \pm 0.070$ & $0.979 \pm 0.060$ & $0.998 \pm 0.058$ & $\mathbf{1.019 \pm 0.064}$ \\
TF8 & $0.890 \pm 0.018$ & $0.912 \pm 0.016$ & $0.897 \pm 0.017$ & $\mathbf{0.923 \pm 0.015}$ \\
TF10 & $0.870 \pm 0.014$ & $0.895 \pm 0.012$ & $0.882 \pm 0.012$ & $\mathbf{0.915 \pm 0.010}$ \\
\bottomrule
\end{tabular}%
}
\vspace{-10pt}
\end{table}

To address \textbf{RQ2} regarding the individual contributions of SPADE's components, we evaluate three variants of our framework.
(1) \textbf{Base} trains the diffusion surrogate using only the standard denoising loss without additional regularization.
(2) \textbf{w/o Prox} incorporates the calibrated diffusion estimation objectives but excludes the support-proximity regularization.
(3) \textbf{w/o Calib} applies the proximity regularization while relying on standard diffusion training objectives, omitting the calibration loss.
\textbf{Full} represents the complete SPADE framework integrating both modules.

Table~\ref{tab:ablation} summarizes the results across all benchmark tasks.
The \textbf{Base} model consistently yields the lowest performance, indicating that standard diffusion training is insufficient for effective optimization.
Integrating either the calibration module (\textbf{w/o Prox}) or the proximity regularization (\textbf{w/o Calib}) provides distinct performance improvements over the baseline.
Most importantly, the \textbf{Full} method achieves the best scores on all tasks, demonstrating that the calibration and proximity modules are complementary and both essential for the framework's success.

To isolate the contribution of the diffusion backbone, we conduct an additional experiment comparing SPADE against a Gaussian Process (GP) surrogate equipped with the same calibration loss and support-proximity regularization, and evolutionary algorithm search pipeline. 
Results are provided in Appendix~\ref{appendix:gp_ablation}.

\section{Related Work}
\label{sec:related_work}

\paragraph{Generative Models for Surrogate Modeling.}
While traditional surrogates rely on Gaussian Processes~\citep{rasmussen2006gaussian} or Deep Ensembles~\citep{lakshminarayanan2017simple}, recent work leverages deep generative models to capture complex uncertainty.
In online settings, methods like Neural Processes~\citep{garnelo2018neural} and LLM-based regressors such as OmniPred~\citep{song2024omnipred} have demonstrated strong distribution modeling capabilities.
In offline optimization, generative models often serve auxiliary roles: Auto-Focus~\citep{fannjiang2020autofocused} uses VAEs for density-based reweighting, while RGD~\citep{chen2024robust} employs diffusion to refine proxy predictions.
Distinct from these approaches, SPADE explicitly parameterizes the forward surrogate $p(y|\boldsymbol{x})$ using a conditional diffusion model, directly leveraging its generative structure to derive calibrated aleatoric and epistemic uncertainty for acquisition-based optimization.
Some diffusion-based reward optimization methods use a feasibility-check reward that keeps the inference close to valid designs.
SEIKO~\citep{uehara2024SEIKO} operates in an online setting where the agent can iteratively query the ground-truth reward oracle.
In contrast, SPADE operates in the offline setting where no oracle access is available.

\paragraph{Offline Model-Based Optimization.}
Existing approaches generally fall into two paradigms: forward methods that optimize designs via gradient ascent on a regressor~\citep{trabucco2021conservative,yu2021roma}, and inverse methods that model $p(\boldsymbol{x}|y)$ to sample high-scoring designs directly~\citep{brookes2019conditioning,kumar2020model}.
Since we have formalized these frameworks in Section~\ref{sec:prelim} and detailed $23$ representative baselines in Section~\ref{sec:exp:setup}, we omit a redundant review here.
More recent works like dLLM~\citep{yuan2026diffusionlargelanguagemodels} and DiBO~\citep{sun2026trainingdiffusionlanguagemodels} employ diffusion language models~\citep{nie2026large} to search for optimal candidates.
SPADE unifies these paradigms by employing a forward optimization strategy guided by a diffusion-based probabilistic surrogate, effectively balancing high-fidelity likelihood modeling with robust geometric constraints.



\section{Conclusion}
\label{sec:conclusion}

In this paper, we introduced SPADE, a support-proximity augmented diffusion surrogate framework for offline black-box optimization.
SPADE explicitly models the forward likelihood $p(y\mid \boldsymbol{x})$ using a conditional diffusion model, integrated with (i) calibration losses to ensure moment and ranking fidelity, and (ii) a kNN-based support regularizer that enforces conservatism in low-density regions.
Extensive evaluations across the Design-Bench suite and a high-dimensional LLM data mixture optimization benchmark demonstrate that SPADE consistently outperforms strong baselines and establishes state-of-the-art performance.
Future work will focus on scaling support estimation to higher dimensions through learnable density proxies and extending this probabilistic paradigm to multi-objective optimizations.


\newpage
\section*{Impact Statement}

This paper presents work whose goal is to advance the field of offline model-based optimization.
Our proposed framework, SPADE, aims to accelerate scientific discovery and engineering design in domains ranging from material science and robotics to biological sequence design and LLM pre-training.
A core motivation of our approach is to mitigate ``reward hacking'' and improve the reliability of optimization in out-of-distribution regions.
By enforcing support-proximity constraints, SPADE aims to produce designs that are not only high-performing but also physically or biologically plausible, thereby contributing to safer and more robust optimization systems.
However, as with many generative design methods, there is a potential for dual-use, particularly in sensitive fields like biological sequence design.
We believe that developing methods with rigorous uncertainty quantification and adherence to data support, as advocated in this work, is a crucial step toward mitigating such risks and ensuring responsible deployment.

\newpage
\bibliography{main}
\bibliographystyle{icml2026}

\newpage
\appendix
\onecolumn
\section{Proof of Proposition \ref{prop:equivalence}} \label{appendix:proof}

\begin{proposition}[First-order equivalence of support-aware acquisition and prior penalization]
\label{prop:prior-equivalence-app}
Let $\mathcal{A}(\mu,\sigma)$ be a continuously differentiable acquisition function that is strictly increasing in $\mu$ and strictly decreasing in $\sigma$, i.e., $\partial_\mu \mathcal{A}(\mu,\sigma)>0$ and $\partial_\sigma \mathcal{A}(\mu,\sigma)<0$.
For a design $\boldsymbol{x}$, let $\mu=\hat{\mu}_\theta(\boldsymbol{x})$ and $\sigma=\hat{\sigma}_\theta(\boldsymbol{x})$, and define the support distance $d(\boldsymbol{x})=\log R_k(\boldsymbol{x})$.
Consider the support transform
\[
T_{d}:(\mu,\sigma)\mapsto\big(\mu-\tau(d),\ \max\{\sigma,\ \sigma_{\min}(d)\}\big),
\quad
\tau(d)=a\,d,\ \ \sigma_{\min}(d)=a_0+a_1 d,
\]
with coefficients $a,a_0,a_1\ge 0$.
Define the support-aware acquisition
$\widetilde{\mathcal{A}}(\boldsymbol{x})=\mathcal{A}\!\left(T_{d(\boldsymbol{x})}(\mu,\sigma)\right)$.

Then there exist a term $C(\boldsymbol{x})\in\mathbb{R}$ and a strictly positive coefficient $\kappa(\boldsymbol{x})$ given by
\[
\kappa(\boldsymbol{x})
=\frac{1}{D}\Big(a\,\partial_\mu \mathcal{A}(\mu,\sigma)
\;-\;a_1\,\partial_\sigma \mathcal{A}(\mu,\sigma)\,\mathbb{I}\{\sigma<a_0\}\Big)\;>\;0
\]
(where $D$ is the ambient dimension, i.e., the dimension of designs), such that for small regularization updates (i.e., small $a, a_1$ or small $d(\boldsymbol{x})$), the following expansion holds:
\begin{equation}
\label{eq:prop-equivalence-app}
\widetilde{\mathcal{A}}(\boldsymbol{x})
= \mathcal{A}\big(\mu,\sigma\big)
- \kappa(\boldsymbol{x})\,\big(-\log \hat{p}_{\text{knn}}(\boldsymbol{x})\big)
+ C(\boldsymbol{x})
+ o\big(\tau(d)\big).
\end{equation}
Consequently, maximizing the support-aware acquisition
$\widetilde{\mathcal{A}}(\boldsymbol{x})$ is, to first order, equivalent to
maximizing the joint objective of utility and prior probability:
\[
\underbrace{\mathcal{A}(\mu,\sigma)}_{\text{Likelihood Utility}} + \underbrace{\kappa(\boldsymbol{x})\, \log \hat{p}_{\text{knn}}(\boldsymbol{x})}_{\text{Prior Constraint}}.
\]
\end{proposition}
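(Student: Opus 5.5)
The plan is to Taylor-expand $\mathcal{A}$ to first order around the untransformed point $(\mu,\sigma)$ and then rewrite the resulting linear term in $d(\boldsymbol{x})$ through the kNN density formula in Eq.~\eqref{eq:knn-density}.

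\textbf{Step 1: the density identity.} Taking logarithms in Eq.~\eqref{eq:knn-density} gives
\[
\log \hat{p}_{\text{knn}}(\boldsymbol{x}) = \log\frac{k}{N V_D} - D\,\log R_k(\boldsymbol{x}).
\]
Hence
\[
d(\boldsymbol{x}) = \frac{1}{D}\Big(-\log\hat{p}_{\text{knn}}(\boldsymbol{x}) + c_0\Big),
\qquad c_0 := \log\frac{k}{N V_D},
\]
and $c_0$ does not depend on $\boldsymbol{x}$. This exact affine relation is the source of the factor $1/D$ in $\kappa$, and of part of the constant $C(\boldsymbol{x})$.

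\textbf{Step 2: first-order expansion of the transform.} I will handle the $\sigma$-component by cases, since it is only piecewise linear in $d$.
\begin{itemize}
\item If $\sigma > a_0$ (the strict inequality is needed for continuity), then for $d$ small enough $\sigma_{\min}(d) = a_0 + a_1 d < \sigma$. The max therefore returns $\sigma$, and only the mean is shifted.
\item If $\sigma < a_0$, then for small $d$ the floor is active and $\sigma$ is replaced by $a_0 + a_1 d$. I will write this as $(\sigma + (a_0 - \sigma)) + a_1 d$. The jump $a_0 - \sigma$ is not small, so I cannot expand around $\sigma$ here.
\end{itemize}
The remedy in the second case is to expand around the floored point $(\mu, a_0)$ instead. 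The zeroth-order discrepancy $\mathcal{A}(\mu, a_0) - \mathcal{A}(\mu,\sigma)$ goes into $C(\boldsymbol{x})$. The first-order coefficient then reads $\partial_\sigma\mathcal{A}(\mu, a_0)$, which I will identify with $\partial_\sigma\mathcal{A}(\mu,\sigma)$ up to that same absorbed shift, or else interpret the statement's derivative as evaluated at the floored point.

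\textbf{Step 3: collect the linear term.} In both cases $C^1$ smoothness gives
\[
\widetilde{\mathcal{A}} = \mathcal{A}(\mu,\sigma) + C_1(\boldsymbol{x}) - a\,d\,\partial_\mu\mathcal{A} + a_1\,d\,\partial_\sigma\mathcal{A}\,\mathbb{I}\{\sigma<a_0\} + o(d).
\]
The linear term is therefore $-d\,\big(a\,\partial_\mu\mathcal{A} - a_1\,\partial_\sigma\mathcal{A}\,\mathbb{I}\{\sigma<a_0\}\big)$. Substituting Step 1 turns $-d$ into $\frac{1}{D}\log\hat{p}_{\text{knn}}(\boldsymbol{x}) - \frac{c_0}{D}$. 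The coefficient of $\log \hat{p}_{\text{knn}}(\boldsymbol{x})$ is then exactly $\kappa(\boldsymbol{x})$, and the leftover $-\kappa(\boldsymbol{x})\,c_0$ is added to $C(\boldsymbol{x})$.

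\textbf{Step 4: sign and remainder.} $\kappa > 0$ provided $a > 0$, or $a_1 > 0$ with the floor active, because $\partial_\mu\mathcal{A} > 0$ and $-\partial_\sigma\mathcal{A} > 0$. For the remainder, $o(d)$ is the same as $o(\tau(d))$ when $a > 0$. It also corresponds to $o(\log\hat{p}_{\text{knn}})$ in the sense of the statement, i.e.\ $o(d)$ after the affine change of variables.

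\textbf{Main obstacle.} The delicate points are the non-smooth max and the boundary case $\sigma = a_0$. At that boundary the one-sided derivatives differ, so I would exclude it, or treat it with a one-sided expansion in which $\mathbb{I}\{\sigma<a_0\}$ selects the branch. I would also state plainly that the asymptotic regime ``$d \to 0$'' means $R_k \to 1$. In that regime $\log\hat{p}_{\text{knn}}$ tends to the constant $c_0$ rather than to $0$, so the little-$o$ should be read as $o(d)$, with everything else absorbed into $C(\boldsymbol{x})$.
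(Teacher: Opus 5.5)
Your skeleton matches the paper's proof. You do a first-order Taylor expansion, split on whether the variance floor is active, and then use the exact identity $d=\frac{1}{D}\big(-\log\hat{p}_{\text{knn}}(\boldsymbol{x})+c_0\big)$ to turn the linear-in-$d$ term into $\kappa(\boldsymbol{x})\log\hat{p}_{\text{knn}}(\boldsymbol{x})$, with $-\kappa c_0$ absorbed into $C(\boldsymbol{x})$.

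\textbf{The gap: your first reconciliation option in the floor-active case fails.} Expanding around the floored point $(\mu,a_0)$ gives the exact first-order coefficient $a\,\partial_\mu\mathcal{A}(\mu,a_0)-a_1\,\partial_\sigma\mathcal{A}(\mu,a_0)$. Note that $\partial_\mu$ is also evaluated at $(\mu,a_0)$, not only $\partial_\sigma$. The stated $\kappa$ instead uses derivatives at $(\mu,\sigma)$. The discrepancy $\big(\partial_\sigma\mathcal{A}(\mu,a_0)-\partial_\sigma\mathcal{A}(\mu,\sigma)\big)a_1 d$, together with its $\partial_\mu$ analogue, is linear in $d$. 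It is therefore not $o(d)$. Nor can it be absorbed ``up to the same shift'' into $C(\boldsymbol{x})$: $C$ must be $d$-independent for the statement to have content, since otherwise any $C$ works trivially.

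\textbf{How the paper avoids this.} The paper expands directly around $(\mu,\sigma)$ while ``assuming small perturbations,'' i.e.\ it tacitly treats the jump $a_0-\sigma$ itself as small. Its remainder is controlled only if $\delta\sigma=(a_0-\sigma)+a_1 d$ is small as a whole. When $a_0-\sigma$ is of order one, that remainder contains contributions linear in $d$, e.g.\ $\partial_\sigma^2\mathcal{A}\,(a_0-\sigma)\,a_1 d$ when $\mathcal{A}\in C^2$, which is exactly your objection.

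\textbf{Fix.} To prove the statement as written, adopt that extra assumption explicitly. As $a_0-\sigma\to 0$, continuity of $\nabla\mathcal{A}$ gives $\partial\mathcal{A}(\mu,a_0)=\partial\mathcal{A}(\mu,\sigma)+o(1)$, so the mismatch is $o(1)\cdot d$ and hence jointly higher order. Your second option, writing $\kappa$ with derivatives at the floored point on the active branch, is the mathematically correct first-order statement. It is, however, a modified statement rather than a proof of the given one.

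Your other refinements are sound and correct slips in the paper's argument:
\begin{itemize}
\item The paper's Case~1 includes $\sigma=a_0$. There, $a_1 d\le\sigma-a_0$ fails for $d>0$ and $a_1>0$, and the transformed acquisition is not differentiable in $d$.
\item The paper deduces positivity from $a,a_1\ge 0$, which only yields $\kappa\ge 0$. Strict positivity needs $a>0$, or $a_1>0$ with the floor active.
\item Since $\log\hat{p}_{\text{knn}}\to c_0$ as $d\to 0$, the main-text remainder $o(\log\hat{p}_{\text{knn}})$ must be read as $o(d)$, equivalently $o(\tau(d))$ for fixed $a>0$.
\end{itemize}
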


\begin{proof}
Fix a design $\boldsymbol{x}$ and abbreviate $d=d(\boldsymbol{x})$, $\mu=\hat{\mu}_\theta(\boldsymbol{x})$, and
$\sigma=\hat{\sigma}_\theta(\boldsymbol{x})$.
Let the perturbations be $\delta\mu=\tau(d)=a d$ and
\[
\delta\sigma(d;\sigma)=\max\{0,\ \sigma_{\min}(d)-\sigma\}
=\max\{0,\ a_0-\sigma+a_1 d\}.
\]
By performing a first-order Taylor expansion of $\mathcal{A}$ around $(\mu,\sigma)$, we obtain:
\begin{equation}
\label{eq:taylor}
\mathcal{A}(\mu-\delta\mu,\ \sigma+\delta\sigma)
=\mathcal{A}(\mu,\sigma)
-\partial_\mu \mathcal{A}(\mu,\sigma)\,\delta\mu
+\partial_\sigma \mathcal{A}(\mu,\sigma)\,\delta\sigma
+ r(\delta\mu, \delta\sigma),
\end{equation}
where $r$ represents higher-order terms that vanish as the perturbations approach zero.
We analyze the variance perturbation $\delta\sigma$ in two cases.

\smallskip
\noindent\textbf{Case 1: $\sigma\ge a_0$ (Variance is sufficient).}
For sufficiently small regularization scales (small $a_1$) or small distance $d$, we have $a_1 d\le \sigma-a_0$.
Thus, the variance floor is not triggered, implying $\delta\sigma=0$.
Substituting $\delta\mu=a d$ and $\delta\sigma=0$ into Eq.~\eqref{eq:taylor} yields:
\[
\widetilde{\mathcal{A}}(\boldsymbol{x})
=\mathcal{A}(\mu,\sigma)
-\partial_\mu \mathcal{A}(\mu,\sigma)\,a\,d
+ \text{h.o.t.}
\]

\smallskip
\noindent\textbf{Case 2: $\sigma<a_0$ (Variance is insufficient).}
Here, the variance floor is active. Assuming small perturbations, $\delta\sigma=(a_0-\sigma)+a_1 d$.
Inserting this into Eq.~\eqref{eq:taylor} gives:
\[
\widetilde{\mathcal{A}}(\boldsymbol{x})
=\mathcal{A}(\mu,\sigma)
-\partial_\mu \mathcal{A}(\mu,\sigma)\,a\,d
+\partial_\sigma \mathcal{A}(\mu,\sigma)\,(a_0-\sigma)
+\partial_\sigma \mathcal{A}(\mu,\sigma)\,a_1 d
+ \text{h.o.t.}
\]
The term $\partial_\sigma \mathcal{A}(\mu,\sigma)\,(a_0-\sigma)$ depends only on the current state $\boldsymbol{x}$ (via $\sigma$) and not on the distance $d$; thus, it can be absorbed into a state-dependent constant $C(\boldsymbol{x})$.
Collecting the terms involving $d$ yields:
\[
\widetilde{\mathcal{A}}(\boldsymbol{x})
=\mathcal{A}(\mu,\sigma)
-\Big(a\,\partial_\mu \mathcal{A}(\mu,\sigma)
- a_1\,\partial_\sigma \mathcal{A}(\mu,\sigma)\Big)\,d
+ C(\boldsymbol{x})
+ \text{h.o.t.}
\]

\smallskip
\noindent\textbf{Combining and Linking to Prior.}
Combining both cases using the indicator function $\mathbb{I}\{\cdot\}$, we have:
\begin{equation}
\label{eq:combine-d}
\widetilde{\mathcal{A}}(\boldsymbol{x})
\approx \mathcal{A}(\mu,\sigma)
-\Big(a\,\partial_\mu \mathcal{A}(\mu,\sigma)
- a_1\,\partial_\sigma \mathcal{A}(\mu,\sigma)\,\mathbb{I}\{\sigma<a_0\}\Big)\,d
+ C(\boldsymbol{x}).
\end{equation}
Recalling the definition of the kNN density estimator from Eq.~\eqref{eq:knn-density}:
\[
\hat{p}_{\text{knn}}(\boldsymbol{x})
=\frac{k}{N V_D R_k(\boldsymbol{x})^{D}}.
\]
Taking the negative logarithm reveals the linear relationship between log-density and the distance metric $d = \log R_k(\boldsymbol{x})$:
\[
-\log \hat{p}_{\text{knn}}(\boldsymbol{x})
= \underbrace{-\log \frac{k}{N V_D}}_{=:C_0} + D \log R_k(\boldsymbol{x})
= C_0 + D\,d.
\]
Substituting $d = \frac{1}{D}(-\log \hat{p}_{\text{knn}}(\boldsymbol{x}) - C_0)$ into Eq.~\eqref{eq:combine-d}, we arrive at:
\[
\widetilde{\mathcal{A}}(\boldsymbol{x})
= \mathcal{A}(\mu,\sigma)
-\underbrace{\frac{a\,\partial_\mu \mathcal{A}(\mu,\sigma)
- a_1\,\partial_\sigma \mathcal{A}(\mu,\sigma)\,\mathbb{I}\{\sigma<a_0\}}{D}}_{\kappa(\boldsymbol{x})}
\Big(-\log \hat{p}_{\text{knn}}(\boldsymbol{x})\Big)
+ C'(\boldsymbol{x}).
\]
Since $\mathcal{A}$ is increasing in $\mu$ ($\partial_\mu \mathcal{A}>0$) and decreasing in $\sigma$ ($\partial_\sigma \mathcal{A}<0$), and all coefficients $a, a_1, D$ are non-negative, it follows that $\kappa(\boldsymbol{x}) > 0$.
This completes the proof that the support-aware transform adds a positive log-prior reward (equivalent to subtracting a negative log-likelihood penalty) to the acquisition function.
\end{proof}

\section{Additional Experiment with Other Acquisition Functions}
\label{appendix:alt_acq}

Our framework is fundamentally compatible with standard acquisition functions beyond the default Lower Confidence Bound (LCB).
Alternative strategies such as Expected Improvement (EI) and Mean-Variance Regression (MVR) can be seamlessly integrated by substituting $\mathcal{A}(\mu,\sigma)$ in Section~\ref{sec:prelim:optimization}.
In all cases, the proposed support-aware transformation operates consistently by decreasing the effective predictive mean and enforcing a minimum uncertainty floor as a function of the support distance.
Consequently, the theoretical proposition discussed in the main text applies to any acquisition function that is monotonically increasing in the predicted mean $\mu$ and decreasing in the uncertainty $\sigma$.

To empirically validate this robustness, we conducted additional evaluations on two representative tasks: \textit{Ant Morphology} (Continuous) and \textit{TF Bind 10} (Discrete).
We compared the performance of SPADE using LCB against variants using EI and MVR.
Our experimental results indicate that varying the acquisition function does not yield statistically significant differences in the final normalized maximum scores, with relative performance fluctuations remaining minimal (approximately $3\%$).
This consistency suggests that SPADE's superior performance is primarily driven by the high-quality, calibrated uncertainty estimates and the rigorous support constraints provided by the underlying diffusion surrogate, rather than the specific heuristic used to aggregate these estimates into a scalar acquisition value.

\section{Runtime Breakdown}
\label{appendix:runtime}

We provide a detailed breakdown of the computational cost of SPADE in Table~\ref{tab:runtime}.
All experiments were conducted on a single NVIDIA A100 (80GB) GPU.
The total runtime consists of two phases: (1) \textbf{Training}, which involves training the surrogate with the dataset $\mathcal{D}$; and (2) \textbf{Optimization}, which involves running the genetic algorithm using the surrogate-derived acquisition function to propose $128$ candidates.

As shown in Table~\ref{tab:runtime}, the computational cost is highly efficient.
For the high-dimensional \textit{TF Bind 10} task, which features the largest dataset ($50,000$ samples), training takes approximately $43$ minutes, while the optimization phase requires only $8$ minutes.
For continuous control tasks like \textit{Ant} and \textit{D'Kitty}, the entire pipeline completes in under $26$ minutes.
Even in the most computationally demanding scenario, the total wall-clock time remains under one hour.
This efficiency demonstrates that SPADE does not incur prohibitive computational overhead, making it a practical solution for real-world offline optimization problems where training time is often negligible compared to the cost of online wet-lab experiments.

\begin{table}[t]
\centering
\caption{\textbf{Wall-clock time breakdown} on a single NVIDIA A100 GPU. We report the duration for training the diffusion surrogate and performing the offline candidate search (generating $128$ designs). All times are in \textit{minutes:seconds}.}
\label{tab:runtime}
\begin{tabular}{l|ccc}
\toprule
\textbf{Task} & \textbf{Training} & \textbf{Optimization} & \textbf{Total} \\
\midrule
SuperC & $14{:}39$ & $08{:}23$ & $23{:}02$ \\
Ant & $13{:}06$ & $12{:}46$ & $25{:}52$ \\
D'Kitty & $13{:}06$ & $12{:}46$ & $25{:}52$ \\
TF8 & $27{:}37$ & $08{:}11$ & $35{:}48$ \\
TF10 & $43{:}01$ & $08{:}23$ & $51{:}24$ \\
LLM-DM & $03{:}08$ & $03{:}01$ & $06{:}09$ \\
\bottomrule
\end{tabular}
\end{table}

\section{Qualitative Synthetic Surfaces}
\label{appendix:qualitative}
To complement our quantitative results on high-dimensional benchmarks, we provide a qualitative analysis of the optimization landscapes learned by SPADE.
Visualizing the decision boundaries and potential energy landscapes in high-dimensional spaces (e.g., 60D for Ant or 86D for SuperC) is inherently difficult.
Therefore, we employ standard 2D synthetic test functions from the BayesO benchmark~\citep{KimJ2023software} to directly inspect the fidelity of our surrogate model.

We selected two representative functions: the \textbf{Beale} function (Figure~\ref{fig:beale_surfaces}), characterized by its sharp peaks and complex multi-modality, and the \textbf{Zakharov} function (Figure~\ref{fig:zakharov_surfaces}), known for its smooth, plate-shaped valley.
We trained SPADE on offline datasets sampled from these functions and queried the diffusion surrogate to reconstruct the landscape.

As illustrated in the figures, the landscape learned by SPADE (Right) exhibits a remarkable resemblance to the ground-truth objective function (Left).
Crucially, SPADE accurately captures the global geometry, the curvature of the gradients, and the precise locations of the extrema without introducing significant artifacts.
This qualitative evidence reinforces our claim that the proposed SPADE effectively models the underlying data distribution, enabling reliable optimization.


\begin{figure*}[t]
    \centering
    \includegraphics[width=0.95\textwidth]{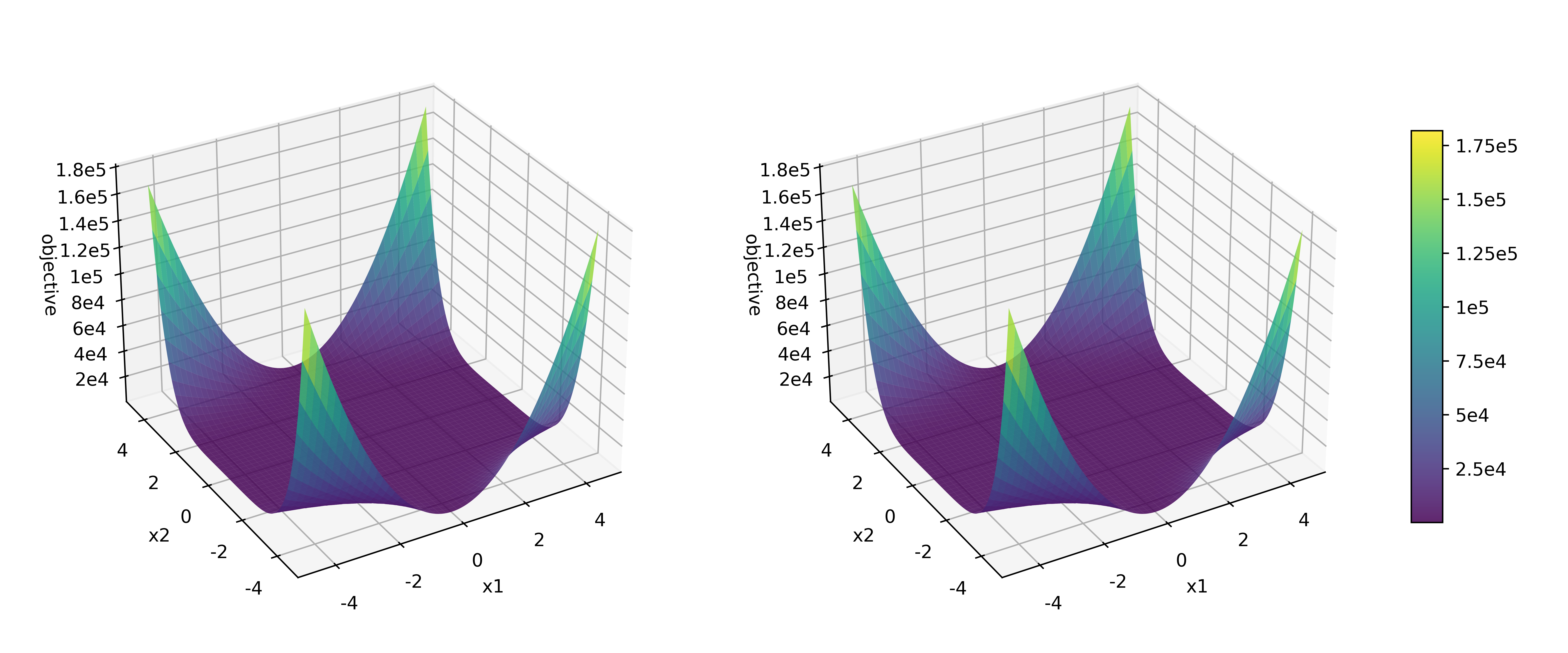}
    \caption{\textbf{Beale (2D):} ground-truth objective surface (left) and the landscape learned by SPADE (right).}
    \label{fig:beale_surfaces}
\end{figure*}

\begin{figure*}[t]
    \centering
    \includegraphics[width=0.95\textwidth]{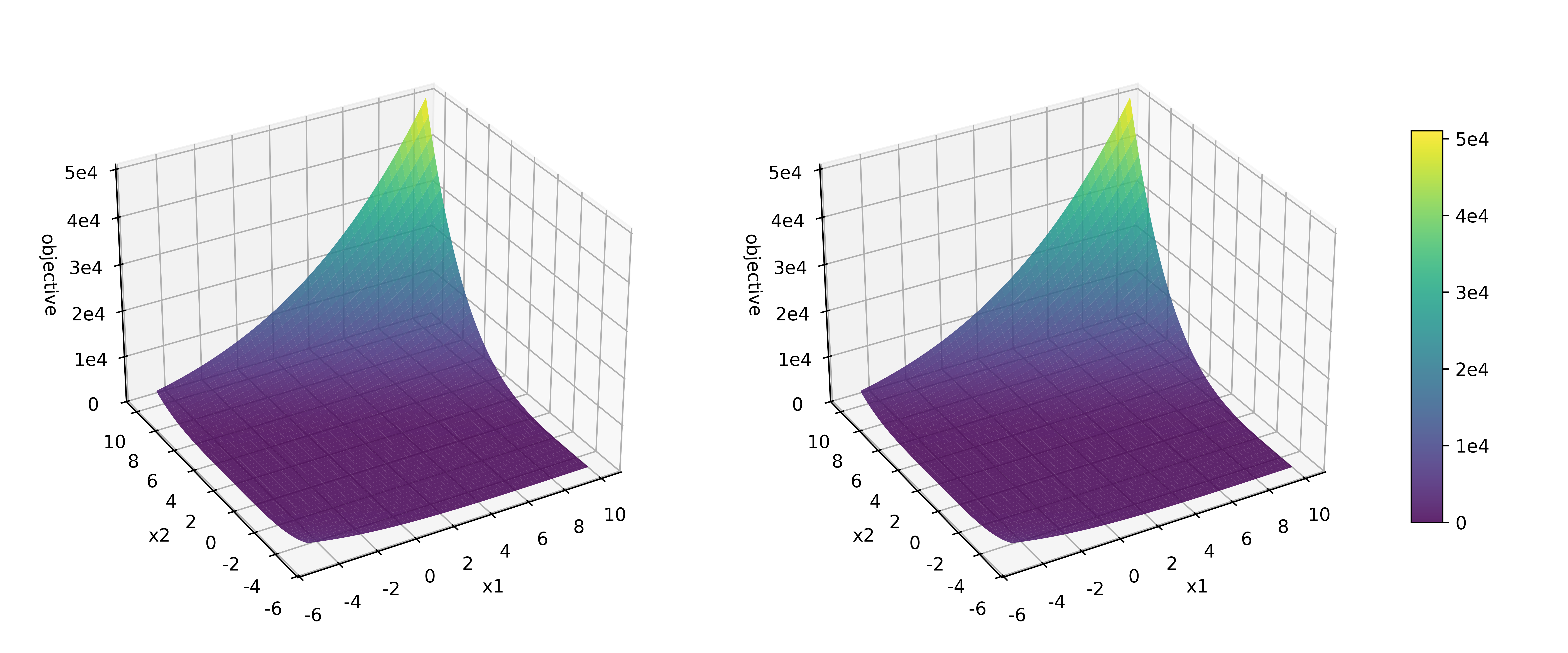}
    \caption{\textbf{Zakharov (2D):} ground-truth objective surface (left) and the landscape learned by SPADE (right).}
    \label{fig:zakharov_surfaces}
\end{figure*}

\section{Normalized Median Score Results}
\label{appendix:median_results}

\begin{table*}[t]
\centering
\caption{Experiment results. We report the \textbf{Normalized Median Score} ($50^{\text{th}}$ percentile) among $K=128$ candidates. Results are averaged over 8 random seeds (mean $\pm$ standard error). \textbf{Bold} indicates the best performance. \underline{Underline} presents the second best method.}
\label{tab:median_results}
\resizebox{\textwidth}{!}{%
\begin{tabular}{l||cccccc||cc}
\toprule
\textbf{Method} & \textbf{SuperC} & \textbf{Ant} & \textbf{D'Kitty} & \textbf{LLM-DM} & \textbf{TF8} & \textbf{TF10} & \textbf{Mean Rank} & \textbf{Median Rank}\\
\midrule
$\mathcal{D}(\text{best})$ & $0.399$ & $0.565$ & $0.884$ & $1.000$ & $0.439$ & $0.511$ & $-$ & $-$ \\
\midrule

\multicolumn{9}{l}{\textit{Standard Optimization Baselines}} \\
\midrule
CMA-ES      & $0.379 \pm 0.003$ & $-0.045 \pm 0.004$ & $0.684 \pm 0.016$ & $0.812 \pm 0.028$ & $0.537 \pm 0.014$ & $0.484 \pm 0.014$ & $16.8/24$ & $15.5/24$ \\
REINFORCE   & $\mathbf{0.463 \pm 0.016}$ & $0.138 \pm 0.032$ & $0.356 \pm 0.131$ & $0.205 \pm 0.035$ & $0.462 \pm 0.021$ & $0.475 \pm 0.008$ & $17.2/24$ & $19.5/24$ \\
BO-qEI      & $0.300 \pm 0.015$ & $0.567 \pm 0.000$ & $0.883 \pm 0.000$ & $0.892 \pm 0.019$ & $0.439 \pm 0.000$ & $0.467 \pm 0.000$ & $15.5/24$ & $16.0/24$ \\
\midrule

\multicolumn{9}{l}{\textit{Forward Surrogate Methods}} \\
\midrule
Standard GA & $0.334 \pm 0.004$ & $0.206 \pm 0.041$ & $0.801 \pm 0.033$ & $0.905 \pm 0.017$ & $0.539 \pm 0.030$ & $0.539 \pm 0.010$ & $13.5/24$ & $15.5/24$ \\
GA on GP    & $0.364 \pm 0.030$ & $0.569 \pm 0.021$ & $0.873 \pm 0.009$ & $0.834 \pm 0.040$ & $0.569 \pm 0.050$ & $0.485 \pm 0.021$ & $12.8/24$ & $13.5/24$ \\
MC-Dropout  & $0.395 \pm 0.027$ & $0.521 \pm 0.047$ & $0.876 \pm 0.005$ & $0.776 \pm 0.262$ & $0.534 \pm 0.022$ & $0.514 \pm 0.023$ & $13.3/24$ & $13.0/24$ \\
COMs        & $0.316 \pm 0.024$ & $0.564 \pm 0.002$ & $0.881 \pm 0.002$ & $0.801 \pm 0.021$ & $0.439 \pm 0.000$ & $0.467 \pm 0.002$ & $17.5/24$ & $18.0/24$ \\
RoMA        & $0.370 \pm 0.019$ & $0.477 \pm 0.038$ & $0.854 \pm 0.007$ & $0.878 \pm 0.026$ & $0.555 \pm 0.020$ & $0.512 \pm 0.020$ & $13.2/24$ & $13.0/24$ \\
ICT         & $0.399 \pm 0.012$ & $0.592 \pm 0.025$ & $0.874 \pm 0.005$ & $0.830 \pm 0.018$ & $0.551 \pm 0.013$ & \underline{$0.541 \pm 0.004$} & $10.5/24$ & $10.0/24$ \\
Tri-mentoring & $0.355 \pm 0.003$ & $0.606 \pm 0.007$ & $0.866 \pm 0.001$ & $0.740 \pm 0.022$ & $0.609 \pm 0.021$ & $0.527 \pm 0.008$ & $11.5/24$ & $11.5/24$ \\
BDI         & $0.412 \pm 0.000$ & $0.474 \pm 0.000$ & $0.855 \pm 0.000$ & $0.912 \pm 0.014$ & $0.439 \pm 0.000$ & $0.476 \pm 0.000$ & $12.8/24$ & $16.0/24$ \\
LTR         & $0.420 \pm 0.017$ & $0.568 \pm 0.016$ & $0.885 \pm 0.002$ & $0.901 \pm 0.016$ & $0.566 \pm 0.026$ & $0.477 \pm 0.010$ & $9.0/24$ & $8.5/24$ \\
MATCH-OPT   & $0.405 \pm 0.019$ & $0.601 \pm 0.020$ & $0.889 \pm 0.004$ & $0.850 \pm 0.017$ & $0.608 \pm 0.022$ & $0.497 \pm 0.013$ & $8.2/24$ & $7.5/24$ \\
PGS         & $0.379 \pm 0.016$ & $0.532 \pm 0.016$ & $\mathbf{0.941 \pm 0.008}$ & $0.580 \pm 0.029$ & $0.375 \pm 0.014$ & $0.443 \pm 0.005$ & $16.7/24$ & $19.5/24$ \\
\midrule

\multicolumn{9}{l}{\textit{Inverse Generative Methods}} \\
\midrule
CbAS        & $0.111 \pm 0.017$ & $0.384 \pm 0.016$ & $0.753 \pm 0.008$ & $0.864 \pm 0.023$ & $0.428 \pm 0.010$ & $0.463 \pm 0.007$ & $20.2/24$ & $21.0/24$ \\
MINs        & $0.336 \pm 0.016$ & $0.618 \pm 0.040$ & $0.887 \pm 0.004$ & $0.903 \pm 0.019$ & $0.421 \pm 0.015$ & $0.468 \pm 0.006$ & $13.2/24$ & $12.5/24$ \\
DDOM        & $0.346 \pm 0.009$ & $0.615 \pm 0.007$ & $0.861 \pm 0.003$ & $0.886 \pm 0.025$ & $0.401 \pm 0.008$ & $0.464 \pm 0.006$ & $16.0/24$ & $18.0/24$ \\
GABO        & $0.350 \pm 0.030$ & $0.190 \pm 0.004$ & $0.674 \pm 0.005$ & $0.832 \pm 0.027$ & $0.496 \pm 0.011$ & $0.457 \pm 0.008$ & $19.5/24$ & $20.0/24$ \\
GTG         & $0.380 \pm 0.022$ & $0.645 \pm 0.098$ & $0.901 \pm 0.005$ & $0.895 \pm 0.020$ & $0.460 \pm 0.032$ & $0.452 \pm 0.010$ & $11.5/24$ & $10.0/24$ \\
RGD         & $0.415 \pm 0.015$ & $0.588 \pm 0.024$ & $0.882 \pm 0.011$ & \underline{$0.954 \pm 0.018$} & $0.590 \pm 0.029$ & $0.492 \pm 0.014$ & $7.3/24$ & $8.0/24$ \\
BONET       & $0.369 \pm 0.015$ & \underline{$0.819 \pm 0.032$} & $0.907 \pm 0.020$ & $0.870 \pm 0.022$ & $0.505 \pm 0.055$ & $0.496 \pm 0.037$ & $9.3/24$ & $11.0/24$ \\
DEMO        & $0.400 \pm 0.007$ & $0.604 \pm 0.005$ & $0.891 \pm 0.002$ & $0.906 \pm 0.016$ & \underline{$0.617 \pm 0.027$} & $0.522 \pm 0.012$ & \underline{$5.5/24$} & $5.5/24$ \\
ROOT        & $0.405 \pm 0.010$ & $0.712 \pm 0.014$ & \underline{$0.919 \pm 0.003$} & $0.905 \pm 0.006$ & $0.595 \pm 0.026$ & $0.473 \pm 0.004$ & $6.2/24$ & \underline{$5.0/24$} \\
\midrule\midrule

\textbf{SPADE (Ours)} &
\underline{$0.436 \pm 0.055$} &
$\mathbf{0.935 \pm 0.010}$ &
$0.905 \pm 0.006$ &
$\mathbf{0.994 \pm 0.030}$ &
$\mathbf{0.679 \pm 0.023}$ &
$\mathbf{0.748 \pm 0.015}$ &
$\mathbf{1.7/24}$ & $\mathbf{1.0/24}$ \\
\bottomrule
\end{tabular}%
}
\end{table*}

We report the normalized median scores ($50^{\text{th}}$ percentile) of the candidates generated by each method in Table~\ref{tab:median_results}.
Since median scores are not universally reported in \citet{dao2025root}, we re-evaluated all baselines using their official implementations under identical experimental settings to ensure a fair comparison.

Consistent with the maximum score results in the main text, SPADE demonstrates exceptional robustness, achieving the highest median scores on \textbf{4 out of 6 tasks} (\textit{Ant}, \textit{LLM-DM}, \textit{TF8}, and \textit{TF10}) and ranking second on \textit{SuperC}.
Consequently, SPADE secures the best \textbf{Mean Rank of 1.7} and \textbf{Median Rank of 1.0} among all 24 methods.
This dominance in median scores indicates that the combination of calibrated diffusion modeling and support-proximity regularization effectively shifts the \textit{entire distribution} of generated candidates toward high-performance regions, avoiding the mode collapse issues often observed in inverse generative baselines (e.g., CbAS, GABO)

\section{Additional Hyperparameters}
\label{appendix:hyperparameters}

As mentioned in Section~\ref{sec:exp:setup}, we employ the \textbf{Optuna} framework~\citep{akiba2019optuna} to automatically tune the two key regularization hyperparameters for each task: the calibration weight $\lambda_1$ and the support-proximity weight $\lambda_2$.
To isolate the impact of these regularization terms, all other hyperparameters are kept fixed across experiments.

For the majority of tasks, including \textit{Ant Morphology}, \textit{D'Kitty Morphology}, \textit{LLM Data Mixture}, and the discrete \textit{TF Bind} tasks, we sample the calibration weight $\lambda_1$ from a uniform distribution in the range $[0.0, 0.8]$ and the proximity weight $\lambda_2$ from a log-uniform distribution in the range $[0.2, 12.0]$.
For the \textit{Superconductor} task, we utilize a slightly constrained search space, sampling $\lambda_1$ from $\mathcal{U}[0.0, 0.6]$ and $\lambda_2$ from $\log\mathcal{U}[0.6, 12.0]$.

\section{Ablation Study with Gaussian Process Backbone} \label{appendix:gp_ablation}

\begin{table}[t]
\centering
\caption{Additional Ablation study on one continuous and one discrete tasks. We report Normalized Maximum Score among $K{=}128$ candidates. Results are mean $\pm$ standard error over $8$ seeds.}
\label{tab:gp_ablation}
\resizebox{0.8\columnwidth}{!}{%
\begin{tabular}{l||cccc|c}
\toprule
\textbf{Task} & \textbf{GP base(same EA + LCB pipeline)} & \textbf{GP + Calib} & \textbf{GP + Prox} & \textbf{GP + Calib + Prox} & \textbf{SPADE} \\
\midrule
Ant & $0.955 \pm 0.021$ & $0.961 \pm 0.009$ & $0.964 \pm 0.018$ & $0.968 \pm 0.013$ & $\mathbf{0.978 \pm 0.006}$ \\
TF10 & $0.734 \pm 0.018$ & $0.762 \pm 0.016$ & $0.789 \pm 0.024$ & $0.802 \pm 0.015$ & $\mathbf{0.915 \pm 0.010}$ \\
\bottomrule
\end{tabular}%
}
\vspace{-10pt}
\end{table}

In this section, we provide additional ablation study results in Table~\ref{tab:gp_ablation} by replacing the diffusion backbone to a standard Gaussian Process (GP) with RBF kernel.
When adding the calibration loss (GP + Calib) and the support-proximity regularization (GP + Prox) separately, both ablated variants improve the vanilla GP backbone with the same evolutionary algorithm (EA) and LCB acquisition function (GP base same EA + LCB pipeline).
Including both calibration loss and support-proximity regularization can consistently enhance the performance further.
However, comparing to our proposed method SPADE, which leverages diffusion backbone, GP + Calib + Prox still underperforms.


\end{document}